\documentclass{article}

\usepackage{arxiv}
\usepackage[T1]{fontenc}

\usepackage{amsmath}
\usepackage{amssymb}
\usepackage{amsthm}
\usepackage{mathrsfs}
\usepackage{subcaption}
\usepackage{booktabs}
\usepackage{array}
\usepackage{tabularx}
\newcolumntype{Y}{>{\centering\arraybackslash}X}
\usepackage{multirow}
\usepackage{makecell}
\usepackage{nicefrac}
\usepackage{graphicx}
\usepackage{tikz}
\usepackage{pgfplots}
\usepackage{forest}
\usepackage{xcolor}
\usepackage{placeins}
\usepackage{xurl}
\usepackage[numbers,sort&compress]{natbib}
\usepackage[colorlinks=true,allcolors=blue]{hyperref}
\pgfplotsset{compat=1.17}
\usetikzlibrary{shapes.geometric, arrows.meta, positioning, calc, fit, backgrounds}

\graphicspath{{figures/}}

\newtheorem{theorem}{Theorem}

\newtheorem{proposition}{Proposition}
\newtheorem{definition}{Definition}
\newtheorem{remark}{Remark}

\forestset{
  declare dimen={fork sep}{0.5em},
  forked edge'/.style={
    edge={rotate/.option=!parent.grow},
    edge path'={(!u.parent anchor) -- (.child anchor)},
  },
  forked edge/.style={
    on invalid={fake}{!parent.parent anchor=children},
    child anchor=parent,
    forked edge',
  },
  forked edges/.style={for nodewalk={#1}{forked edge}},
  forked edges/.default=tree,
}

\newcommand{\R}{\mathbb{R}}
\newcommand{\sgn}{\operatorname{sgn}}
\newcommand{\softplus}{\operatorname{softplus}}
\newcommand{\dd}{\mathrm{d}}

\usepackage{authblk}

\title{\Large FHKAN: Fractional Hermite Kolmogorov--Arnold Networks\\for Data-Driven and Physics-Informed Learning}
\renewcommand{\shorttitle}{FHKAN: Fractional Hermite Kolmogorov--Arnold Networks}
\date{}

\author[1]{Arsha Farnoush}
\author[1,2]{Amirhossein Sadr}
\author[2]{Aida Pakniyat}
\author[1]{Dara Rahmati\thanks{Corresponding author. \texttt{d\_rahmati@sbu.ac.ir}}}
\author[3]{Saeid Gorgin}

\author[4,5,6]{Kourosh Parand}
\affil[1]{Department of Computer Science and Engineering, Shahid Beheshti University, Tehran, Iran}
\affil[2]{School of Computer Science, Institute for Research in Fundamental Sciences (IPM), Tehran, Iran}
\affil[3]{School of Physics, Engineering and Computer Science, University of Hertfordshire, Hatfield, UK}
\affil[4]{Department of Computer and Data Sciences, Faculty of Mathematical Sciences, Shahid Beheshti University, Tehran, Iran}
\affil[5]{Department of Cognitive Modeling, Institute for Cognitive and Brain Sciences, Shahid Beheshti University, Tehran, Iran}
\affil[6]{International Business University, Toronto, ON, Canada}

\hypersetup{
  pdftitle={FHKAN: Fractional Hermite Kolmogorov-Arnold Networks for Data-Driven and Physics-Informed Learning},
  pdfauthor={Amirhossein Sadr, Arsha Farnoush, Aida Pakniyat, Dara Rahmati, Kourosh Parand, Saeid Gorgin},
  pdfkeywords={Kolmogorov-Arnold networks, Fractional Hermite functions, Ordinary differential equations, Partial differential equations, Physics-informed machine learning, Activation functions},
}

\begin{document}

\maketitle

\begin{abstract}
Kolmogorov--Arnold Networks (KANs) replace fixed scalar weights by learnable
univariate maps, yet most spectral KANs still rely on Chebyshev
polynomials that live on a bounded interval and therefore require a saturating
squashing of the pre-activation. We introduce the Fractional Hermite Kolmogorov--Arnold Network (FHKAN), a
neural architecture that equips KANs with an
adaptive, trainable fractional-order generalized Hermite basis. With a nonnegative shape parameter, the underlying
generalized Hermite functions are orthonormal on the entire real line and
uniformly bounded under a Gaussian envelope, so they act directly on unbounded
pre-activations without any saturating map. We derive closed-form derivatives
of the generalized, fractional, and scaled basis away from the origin and
determine its order of differentiability at the origin, where the fractional
power map creates a singular point. On
synthetic regression, FHKAN attains the lowest test error at every network
depth among nine fixed activations and the spline-based KAN. On image
classification, image denoising, and sentiment analysis, the best-performing
fractional Hermite block yields the highest accuracy or reconstruction
quality. On physics-informed problems, FHKAN reaches mean absolute errors of order
$10^{-6}$ to $10^{-5}$ on the Lane--Emden equation, residuals of order
$10^{-4}$ on the Burgers equation, and absolute errors below $10^{-4}$ on a
fractional delay differential equation.

\end{abstract}

\keywords{Kolmogorov--Arnold networks \and Fractional Hermite functions \and
Ordinary differential equations \and Partial differential equations \and
Physics-informed machine learning \and Activation functions}

\section{Introduction}
The expressive power of an artificial neural network is, to a large extent,
determined by the non-linear transformation that is applied to the affine
combination of its inputs. Historically, this transformation has taken the form
of a fixed, scalar activation function shared by every neuron in a layer
\cite{aggarwal2018neural, jagtap2022important}. Saturating choices such as the
sigmoid and the hyperbolic tangent enabled the first generation of deep models
but suffer from vanishing gradients \cite{han1995influence, lecun2002efficient},
whereas the rectified linear unit (ReLU) \cite{nair2010rectified} and its many
descendants---Leaky ReLU \cite{maas2013rectifier}, parametric ReLU
\cite{he2015delving}, GELU \cite{hendrycks2016gaussian}, ELU
\cite{clevert2015fast}, and Mish \cite{misra2019mish}---traded smoothness for
sparsity at the price of dead units. A recurring theme in this line of research
is that the activation function should be adapted to the data, a viewpoint embodied by trainable activations such
as parametric ReLU \cite{he2015delving}, the learnable extended activation
\cite{bodyanskiy2023learnable}, modReLU \cite{arjovsky2016unitary}, and DELU
\cite{pishchik2023trainable}.

Kolmogorov--Arnold Networks (KANs) \cite{liu2024kan} push this idea to its
logical conclusion. Motivated by the Kolmogorov--Arnold representation theorem,
they discard scalar weights altogether and place learnable univariate functions
on the edges of the computational graph. The original formulation parametrizes
these functions through B-splines, and a rapidly growing literature demonstrates
that KANs can match or exceed multi-layer perceptrons in accuracy while using
markedly fewer parameters, with improved interpretability and favorable scaling
behavior \cite{liu2024kan, cheon2024kolmogorov, liu2024ikan, sadr20262, sadr2026pg, genet2024temporal,
abueidda2024deepokan, naraghi2026iga}. The choice of the univariate family is, however, far from
unique: wavelet \cite{bozorgasl2024wav}, Fourier \cite{xu2024fourierkan},
radial-basis \cite{li2024kolmogorov}, temporal \cite{genet2024tkan}, and smooth
\cite{samadi2024smooth} variants have all been proposed, each tailoring the
inductive bias of the network to a particular class of signals.
Table~\ref{tbl:kan-variants} groups these variants by the univariate
family placed on the edges, the natural domain of each family, and the
corresponding references.

\begin{table}[htbp]
\centering
\caption{Summary of Kolmogorov--Arnold Network variants grouped by the
univariate basis family used as edge functions, the natural domain on which
each family is defined, and the corresponding references. Only the proposed
fractional generalized Hermite basis operates natively on the entire real
line.}
\vspace{0.5em}
\small
\begin{tabularx}{\textwidth}{@{}llX@{}}
\toprule
Basis family of the KAN variant & Natural domain & References \\
\midrule
B-splines (original KAN) & bounded, local & \cite{liu2024kan, cheon2024kolmogorov, liu2024ikan, abueidda2024deepokan} \\
Wavelets & bounded/local & \cite{bozorgasl2024wav} \\
Fourier bases & periodic & \cite{xu2024fourierkan} \\
Radial basis functions & local & \cite{li2024kolmogorov} \\
Temporal/recurrent bases & sequential & \cite{genet2024tkan, genet2024temporal} \\
Smooth structural bases & bounded & \cite{samadi2024smooth} \\
Chebyshev polynomials (first kind) & bounded, $[-1,1]$ & \cite{ss2024chebyshev, SynodicMonth2024} \\
Fractional Jacobi functions & bounded, $[-1,1]$ & \cite{aghaei2024fkan} \\
Fractional generalized Hermite functions & unbounded, $\R$ & this work \\
\bottomrule
\end{tabularx}
\label{tbl:kan-variants}
\end{table}

A particularly attractive subfamily replaces splines with orthogonal
polynomials, which combine global support, analytic derivatives, and a sound
approximation theory \cite{shen2011spectral, gautschi2004orthogonal,
singh2012approximation}. Orthogonal polynomials have a long track record as
neural activations: Chebyshev polynomials have been used for classification and
denoising \cite{deepthi2023development} and for control
\cite{alsaadi2023control}, Legendre polynomials for max-pooling surrogates
\cite{venkatappareddy2021legendre}, Laguerre polynomials for forecasting and
sensing \cite{wang2020wind, patra2010development}, Hermite functions for
biomedical signal classification \cite{ebrahimzadeh2016classification,
ma2005constructive, rigatos2006feed}, and Bernstein bases for graph learning
\cite{guo2023graph}. Within the KAN framework, Chebyshev KANs
\cite{ss2024chebyshev, SynodicMonth2024} and fractional KANs based on fractional Jacobi functions
\cite{aghaei2024fkan} have shown that a spectral edge function can be both
accurate and inexpensive. All of these polynomials are special cases of the
generalized hypergeometric function \cite{carlson1991b, benzoubeir2009hypergeometric,
vieira2023bicomplex}, and the choice among them amounts to selecting which
analytic properties to favor.

Two limitations are common to virtually every polynomial KAN proposed so far.
First, classical orthogonal polynomials (Chebyshev, Legendre, Jacobi, and
Gegenbauer polynomials)
are orthogonal only on a \emph{bounded} interval. Because the pre-activation of a
linear layer is unbounded, these methods must first compress it into the
admissible interval through a saturating map such as a sigmoid or a hyperbolic
tangent \cite{ss2024chebyshev, aghaei2024fkan}; this auxiliary non-linearity
re-introduces saturation, distorts the geometry of the input, and effectively
wastes the tails of the distribution. Second, polynomials of integer degree
inherit Runge oscillations, numerical conditioning issues, and a limited ability
to model non-smooth or singular behavior using only a few terms
\cite{shen2011spectral}. Fractional orthogonal functions, obtained by composing a
polynomial with a fractional power map, have proven remarkably effective at
mitigating the latter issue in numerical analysis
\cite{rad2023learning, parand2017accurate, parand2016solving, parand2017fractional,
hajimohammadi2021fractional}, yet they have only begun to be explored inside
KANs.

In this paper we argue that the Hermite family is uniquely suited to address
both limitations at once, and we build a KAN around it. Generalized
Hermite functions are orthonormal on the whole real line, are bounded by a
Gaussian envelope, and decay to zero at infinity; consequently they can be
applied directly to the output of a linear layer with no squashing,
while simultaneously providing the built-in localization that makes splines and
radial-basis functions attractive. By endowing the Hermite basis with a
trainable fractional exponent, a shape parameter, and a scale parameter, we
obtain an adaptive edge function whose smoothness, locality, and effective
spectral order are discovered from the data. We call the resulting building
block the \emph{fractional Hermite neural block} (fHNB) and the network that it
defines the \emph{Fractional Hermite Kolmogorov--Arnold Network} (FHKAN).

The contributions of this work are summarized as follows.
\begin{itemize}
  \item We introduce a trainable fractional-order generalized Hermite basis for
  Kolmogorov--Arnold Networks that operates natively on the unbounded real line,
  removing the squashing step required by bounded-domain polynomial KANs.
  \item We design the fractional Hermite neural block, a self-contained,
  differentiable module with three interpretable parameters---shape $\mu$,
  fractional order $\gamma$, and scale $\lambda$---together with the
  reparametrizations that keep it well posed during gradient-based training, and
  we derive closed-form
  derivatives of the generalized, fractional, and scaled basis for every value
  of the shape parameter, together with the exact order of differentiability
  of each basis order and of the whole block at the origin.
  \item We validate FHKAN on a comprehensive suite of data-driven tasks
  (regression, image classification, image denoising, sentiment analysis) and
  physics-informed tasks (ordinary, partial, and fractional delay differential
  equations). On the
  data-driven tasks FHKAN is compared with fixed activations and with the
  spline-based KAN. On the physics-informed tasks
  the learned solutions are assessed against exact or high-accuracy
  reference solutions.
\end{itemize}

The remainder of the paper is organized as follows. Section~\ref{sec:motivation}
motivates the proposed basis by contrasting bounded and unbounded function
families and by examining the locality--globality trade-off. Section~\ref{sec:background}
reviews multi-layer perceptrons, Kolmogorov--Arnold Networks, and the
generalized Hermite functions on which our method rests,
including the classical spectral approximation theory on $\R$ and on $[0,+\infty)$
that motivates the
collocation experiments. Section~\ref{sec:method}
develops the fractional Hermite neural block and the FHKAN architecture.
Section~\ref{sec:experiments} reports the experimental evaluation, Section~\ref{sec:discussion} provides a discussion of parameter efficiency and
Section~\ref{sec:conclusion} concludes. A summary of the notation used
throughout the paper is collected in Table~\ref{tbl:notation}.

\section{Motivation}
\label{sec:motivation}
Before formalizing the method, we make explicit the design principles that lead
to a Hermite-based KAN. Our reasoning revolves around three properties that an
edge function should ideally satisfy simultaneously: \emph{domain compatibility}
with the unbounded output of a linear layer, a controllable balance between
\emph{global expressiveness} and \emph{local adaptivity}, and \emph{closed-form derivatives} for both backpropagation and physics-informed residuals.

\paragraph{Domain compatibility}
Let $z=\theta^{\top}\boldsymbol\zeta+b$ denote a pre-activation. In general
$z\in\R$, so any basis used as an edge function must be meaningful on the whole
real line. Classical orthogonal polynomials---Chebyshev, Legendre, Jacobi,
and Gegenbauer polynomials---are orthogonal only on a compact interval
and diverge polynomially
outside it. Spectral KANs therefore prepend a bounded map $g:\R\to(d_0,d_1)$,
typically a sigmoid or a hyperbolic tangent, before evaluating the polynomial
\cite{ss2024chebyshev, aghaei2024fkan}. This detour has three undesirable
consequences: it saturates for large $|z|$ and thus discards information carried
by the tails, it composes an additional non-linearity that is not adapted to the
task, and it introduces a fixed length scale unrelated to the data. A basis that
is square-integrable on the entire real line removes the need for $g$ altogether.
The Hermite functions $\{\psi_n\}$ possess exactly this property: they are an
orthonormal basis of $L^2(\R)$, they are uniformly bounded, and they decay to
zero at infinity, so they can be evaluated directly at any $z\in\R$.
The same functions are used on $(-\infty,+\infty)$ and, through the map
in Section~\ref{sec:hermite-spectral}, on $[0,+\infty)$.

\paragraph{Locality versus globality}
Spline and radial-basis KANs owe their flexibility to \emph{local} support: each
basis element influences only a neighborhood, which makes them robust but also
non-smooth and parameter-hungry. Orthogonal polynomials sit at the opposite
extreme: they are perfectly smooth and \emph{global}, but a localized feature
must be reconstructed from many high-degree terms, which invites oscillations.
Hermite functions occupy a fertile middle ground. The Gaussian envelope
$e^{-x^2/2}$ confines the effective support of $\psi_n$ to an interval whose
width grows only as $\sqrt{n}$, so low-order Hermite functions behave like smooth
bumps---comparable to radial-basis functions---while higher orders progressively
recover global, oscillatory detail. Endowing the basis with a trainable scale
$\lambda$ lets the network dilate or contract this effective support, and a
trainable fractional order $\gamma$ continuously deforms the otherwise integer
spectral content. Figure~\ref{fig:frac-hermite} illustrates how the fractional
order reshapes the basis without leaving the real line or violating boundedness.

\paragraph{Closed-form derivatives}
Gradient-based training and physics-informed residuals both require derivatives
of the edge function. Classical Hermite functions satisfy the ladder (creation and
annihilation) relations of the quantum harmonic oscillator, so their derivatives
are expressed in closed form as a combination of neighboring Hermite functions
of the same family. For
the generalized, fractional, and scaled basis used by FHKAN, a relation of the
same type holds away from the origin (Section~\ref{sec:block-regularity}).
At the origin the fractional power map has an unbounded derivative, and the
order of differentiability of each edge function is determined there
by its shape and fractional parameters. This regularity matters most for
physics-informed residuals that contain second derivatives.

Figure~\ref{fig:taxonomy} situates the Hermite family within the hypergeometric
hierarchy and highlights the branch---defined on $\R$ with a Gaussian weight and
controlled by the reflection parameter $\mu$---that the present work exploits.
Whereas previous KANs traveled down the Chebyshev-polynomial branch
(bounded support, weight
$(1-x)^\alpha(1+x)^\beta$), FHKAN travels down the Hermite-function
branch (real line,
weight $|x|^{2\mu}e^{-x^2}$), and recovers the standard Hermite functions when
the shape parameter $\mu$ vanishes.

{\small
\begin{figure}[htbp]
  \centering
  \resizebox{0.96\textwidth}{!}{
  \begin{forest}
    for tree={grow'=-90, draw, rounded corners, align=center, inner sep=5pt,
      l=18pt, l sep=16pt, font=\small},
    forked edges,
    [\textbf{Generalized Hypergeometric Function}\\${}_p\mathcal{F}_q$
      [Gaussian\\Hypergeometric\\${}_2\mathcal{F}_1$
        [{Jacobi polynomials$(\alpha,\beta)$}\\{\footnotesize bounded, $[-1,1]$}
          [{Gegenbauer}\\{$\alpha=\beta$}
            [{Legendre (first kind)}\\{$\alpha=\beta=0$}]
            [{Chebyshev (first kind)}\\{$\alpha=\beta=-\nicefrac12$}]
          ]
        ]
      ]
      [Confluent\\Hypergeometric\\${}_1\mathcal{F}_1$
        [{Laguerre$(\nu)$}\\{half line, $[0,\infty)$}]
        [{\textbf{Generalized Hermite}}$(\mu)$\\{\textbf{real line}, $\R$, weight $|x|^{2\mu}e^{-x^2}$}, draw, very thick
          [{Hermite}\\{$\mu=0$}, draw, very thick]
        ]
      ]
    ]
  \end{forest}}
  \caption{Position of the generalized Hermite family within the hypergeometric
  hierarchy. Earlier spectral KANs operate on the Chebyshev-polynomial
  branch, whose members
  are orthogonal only on a bounded interval. FHKAN operates on the Hermite
  branch (highlighted), which is orthogonal on the entire real line under the
  weight $|x|^{2\mu}e^{-x^2}$ and reduces to the classical Hermite functions for
  $\mu=0$. The Chebyshev and Legendre polynomials displayed in the
  Jacobi-polynomial branch are of the first kind.}
  \label{fig:taxonomy}
\end{figure}}

\section{Background}
\label{sec:background}
This section fixes notation and recalls the three ingredients underlying FHKAN:
the multi-layer perceptron, the Kolmogorov--Arnold network, and the family of
generalized Hermite functions.

Lower-case symbols denote univariate objects: $\psi_n^{(\mu)}$ and
$\Psi_n^{(\mu,\gamma)}$ are the generalized Hermite function and its fractional
counterpart, and $\phi_q$ is one edge function. Upper-case symbols denote
network-level maps: $\Phi$ combines the edges of a Kolmogorov--Arnold layer
and $\hat\chi$ is the map realized by the whole network. In the acronyms,
fHNB is one block and FHKAN is the network built from those blocks.
Table~\ref{tbl:notation} lists the symbols. In this section the symbol
$\psi$ is also used for the output map of a multi-layer perceptron; from
Section~\ref{sec:hermite} onward, $\psi_n^{(\mu)}$ is the Hermite function.

\begin{table}[htbp]
\centering
\caption{Summary of the notation. Lower-case symbols denote univariate
functions and upper-case symbols denote network-level maps.}
\vspace{0.5em}
\footnotesize
\begin{tabularx}{\textwidth}{@{}llX@{}}
\toprule
Symbol & Type & Meaning \\
\midrule
$\boldsymbol\zeta,\ \zeta_p$ & vector, scalar & input of the network and its $p$-th component \\
$\chi,\ \hat\chi$ & network map & target function and the map realized by the network \\
$\psi$ & network map & task-dependent output map of a multi-layer perceptron
  (distinct from the Hermite functions $\psi_n^{(\mu)}$) \\
$\Theta^{(i)},\ \mathbf{b}^{(i)}$ & parameters & weight matrix and bias of layer $i$ \\
$\theta_{p,q}$ & parameter & expansion coefficient of order $q$ on input $p$ \\
$\sigma$ & univariate & fixed scalar activation, or the logistic sigmoid \\
$\phi_{q,p},\ \phi_q$ & univariate & inner (edge) functions of a Kolmogorov--Arnold layer \\
$\Phi_q,\ \boldsymbol\Phi_i$ & network map & outer functions of a Kolmogorov--Arnold layer \\
$H_n^{(\mu)}$ & univariate & generalized Hermite polynomial of degree $n$ \\
$\psi_n^{(\mu)}$ & univariate & generalized Hermite function of order $n$ \\
$\Psi_n^{(\mu,\gamma)}$ & univariate & fractional generalized Hermite function \\
$\vartheta_\gamma$ & univariate & odd fractional map $\sgn(x)|x|^{\gamma}$ \\
$w_\mu,\ \varpi_\gamma$ & univariate & orthogonality weights of $H_n^{(\mu)}$ and $\Psi_n^{(\mu,\gamma)}$ \\
$\varrho_n^{(\mu)}$ & constant & normalization constant of $H_n^{(\mu)}$ \\
$\mu,\ \gamma,\ \lambda$ & parameters & unconstrained shape, fractional order and scale of an fHNB \\
$\hat\mu,\ \hat\gamma,\ \hat\lambda$ & parameters & their constrained counterparts, see \eqref{eq:fhnb} \\
$Q,\ q$ & hyperparameter, index & largest order of an fHNB block and the running order \\
$\omega$ & scalar & mapped collocation coordinate \\
$\mathcal{H}_Q$ & space & span of $\{\psi_0^{(\mu)},\dots,\psi_Q^{(\mu)}\}$ \\
$P_Q$ & operator & $L^2(\R)$ orthogonal projection onto $\mathcal{H}_Q$ \\
$I_Q$ & operator & interpolation at the Hermite--Gauss nodes of $H_{Q+1}$ \\
$\widehat\psi_n$ & univariate & Pulled back Hermite function \\
$\mathcal{N},\ \mathcal{S},\ \mathfrak{R}$ & operators & differential operator, source term and residual \\
$\alpha$ & constant & order of the Caputo fractional derivative of the equation \\
$m$ & constant & polytropic index of the Lane--Emden equation \\
\bottomrule
\end{tabularx}
\label{tbl:notation}
\end{table}

\subsection{Multi-layer perceptrons}
A multi-layer perceptron (MLP) approximates an unknown map
$\chi:\R^{d}\to\R$ by alternating affine transformations with a fixed scalar
non-linearity $\sigma(\cdot)$. Writing $\mathbf{h}^{(0)}=\boldsymbol\zeta$, the
network is the composition
\begin{equation*}
  \begin{aligned}
  \mathbf{h}^{(i)} &= \sigma\!\left(\Theta^{(i)}\mathbf{h}^{(i-1)} + \mathbf{b}^{(i)}\right), \quad i = 1,\dots,L-1,\\
  \hat{\chi}(\boldsymbol\zeta) &= \psi\!\left(\Theta^{(L)}\mathbf{h}^{(L-1)} + \mathbf{b}^{(L)}\right),
  \end{aligned}
\end{equation*}
where $\Theta^{(i)}$ and $\mathbf{b}^{(i)}$ are the learnable weights and biases
and $\psi$ is a task-dependent output map. The universal approximation theorem
guarantees that, under mild conditions on $\sigma$, a sufficiently wide single
hidden layer can approximate any continuous function on a compact set to
arbitrary accuracy \cite{pinkus1999approximation, gripenberg2003approximation,
lu2017expressive, kidger2020universal, nishijima2021universal, kumagai2020universal};
depth then improves the rate of this approximation in practice. Because the
gradients propagated during backpropagation are modulated by $\sigma'(\cdot)$,
a useful activation function should be differentiable, non-linear, computationally
cheap, and free of vanishing or exploding gradients \cite{jagtap2022important}.

\subsection{Kolmogorov--Arnold networks}
Kolmogorov--Arnold networks replace the fixed scalar activation by learnable
univariate functions placed on the edges of the graph. They are motivated by the
Kolmogorov--Arnold representation theorem
\cite{kolmogorov1957representation, braun2009constructive, sprecher1965structure,
lorentz1962metric}, which states that any continuous
$\chi:[0,1]^{d}\to\R$ admits the exact decomposition
\begin{equation}
  \chi(\boldsymbol\zeta) = \sum_{q=0}^{2d}\Phi_q\!\left(\sum_{p=1}^{d}\phi_{q,p}(\zeta_p)\right),
  \label{eq:kart}
\end{equation}
for suitable continuous outer functions $\Phi_q:\R\to\R$ and inner functions
$\phi_{q,p}:[0,1]\to\R$.
Interpreting the right-hand side of \eqref{eq:kart} as a
two-layer network with learnable univariate maps, and stacking such layers,
yields a deep KAN
\begin{equation*}
  \hat{\chi}(\boldsymbol\zeta) = \big(\boldsymbol\Phi_{n-1}\circ\cdots\circ\boldsymbol\Phi_{1}\circ\boldsymbol\Phi_{0}\big)(\boldsymbol\zeta),
\end{equation*}
whose convergence and smoothness have been analyzed in
\cite{liu2024kan, schmidt2021kolmogorov, dzhenzher2021structured}. While the
original KAN parametrizes the univariate maps with B-splines
\cite{forster2011splines, manni2017generalized}, a computationally lighter route
expands them in a fixed orthogonal system and learns only the expansion
coefficients, an architecture sometimes called a learnable activation network.
The Chebyshev KAN \cite{ss2024chebyshev, SynodicMonth2024} is the prototypical
example,
\begin{equation*}
  \hat{\chi}(\boldsymbol\zeta) = \sum_{q=1}^{n}\sum_{p=1}^{d}\theta_{p,q}\,T_q\!\big(\tanh(\zeta_p)\big),
\end{equation*}
where $T_q$ is the Chebyshev polynomial of the first kind of degree $q$
and the hyperbolic tangent
maps the input into the admissible interval $[-1,1]$.

\subsection{Generalized Hermite functions}
\label{sec:hermite}
We now collect the properties of the Hermite family that make it a compelling
basis for KANs. The generalized Hermite polynomials $\{H_n^{(\mu)}\}_{n\ge0}$,
introduced in the context of the Bose-like oscillator and the Dunkl calculus
\cite{rosenblum1994generalized, dunkl1989differential, chihara2011introduction},
are defined for a reflection parameter $\mu>-\tfrac12$ by the three-term
recurrence
\begin{equation}
  \begin{aligned}
    H_0^{(\mu)}(x) &= 1, \qquad H_1^{(\mu)}(x) = 2x,\\
    H_{n+1}^{(\mu)}(x) &= 2x\,H_n^{(\mu)}(x) - 2\,\big(n + \mu\,[1-(-1)^n]\big)\,H_{n-1}^{(\mu)}(x), \quad n\ge1.
  \end{aligned}
  \label{eq:hermite-recurrence}
\end{equation}
For $\mu=0$ the bracketed term collapses to $n$ and \eqref{eq:hermite-recurrence}
reduces to the classical (physicists') Hermite recurrence. These polynomials are
even or odd according to the parity of $n$, and they reduce to generalized
Laguerre polynomials under the quadratic substitution $x\mapsto x^2$,
\begin{equation*}
  \begin{aligned}
  H_{2k}^{(\mu)}(x) &= (-1)^k\,2^{2k}\,k!\,L_k^{(\mu-\nicefrac12)}(x^2),\\
  H_{2k+1}^{(\mu)}(x) &= (-1)^k\,2^{2k+1}\,k!\,x\,L_k^{(\mu+\nicefrac12)}(x^2),
  \end{aligned}
\end{equation*}
which explains their placement on the confluent branch of
Figure~\ref{fig:taxonomy}.

\begin{theorem}[Orthogonality]\label{thm:orth}
For $\mu>-\tfrac12$ the generalized Hermite polynomials are orthogonal on $\R$
with respect to the weight $w_\mu(x)=|x|^{2\mu}e^{-x^2}$:
\begin{equation*}
  \int_{-\infty}^{\infty} H_m^{(\mu)}(x)\,H_n^{(\mu)}(x)\,|x|^{2\mu}e^{-x^2}\,\dd x
  = \varrho_n^{(\mu)}\,\delta_{m,n},
\end{equation*}
with normalization constants
$\varrho_{2k}^{(\mu)} = 2^{4k}\,k!\,\Gamma\!\big(k+\mu+\tfrac12\big)$ for even
indices and
$\varrho_{2k+1}^{(\mu)} = 2^{4k+2}\,k!\,\Gamma\!\big(k+\mu+\tfrac32\big)$ for odd
indices. The normalized polynomials
$H_n^{(\mu)}/\sqrt{\varrho_n^{(\mu)}}$ are orthonormal in the weighted space
$L^2_{w_\mu}(\R)$. Folding the square root of the weight into these
polynomials produces the generalized Hermite functions
\eqref{eq:hermite-functions}, which are orthonormal in $L^2(\R)$.
\end{theorem}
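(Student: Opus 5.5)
The plan is to reduce everything to the classical orthogonality of generalized Laguerre polynomials through the parity decomposition displayed just before the statement. For $\alpha>-1$ we use
\[
\int_0^\infty L_j^{(\alpha)}(t)L_k^{(\alpha)}(t)\,t^{\alpha}e^{-t}\,\dd t=\frac{\Gamma(k+\alpha+1)}{k!}\,\delta_{j,k}.
\]
We first need the parity relations themselves. They follow by induction on $n$ from the recurrence \eqref{eq:hermite-recurrence}. The step $H_{2k}\to H_{2k+1}$ uses the coefficient $2(2k)$, and the step $H_{2k+1}\to H_{2k+2}$ uses $2(2k+1+2\mu)$. In each case the claimed identity reduces to one of the standard Laguerre relations:
\[
xL_k^{(\alpha+1)}(x)=(k+\alpha+1)L_k^{(\alpha)}(x)-(k+1)L_{k+1}^{(\alpha)}(x),
\qquad
L_k^{(\alpha+1)}-L_{k-1}^{(\alpha+1)}=L_k^{(\alpha)}.
\]
I expect this induction, with the index bookkeeping in the odd/even coefficients, to be the main obstacle. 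Everything after it is substitution.

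With the parity relations in hand, the case $m\not\equiv n \pmod 2$ is immediate. The integrand $H_mH_n|x|^{2\mu}e^{-x^2}$ is then odd, and it is integrable because $2\mu>-1$ makes $|x|^{2\mu}$ integrable near $0$. So the integral vanishes.

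For $m=2j$ and $n=2k$, the integrand is even. We write the integral as $2\int_0^\infty$ and substitute $t=x^2$, so that $\dd x=\tfrac12 t^{-1/2}\dd t$ and $|x|^{2\mu}=t^{\mu}$. This gives
\[
(-1)^{j+k}2^{2j+2k}j!\,k!\int_0^\infty L_j^{(\mu-\frac12)}L_k^{(\mu-\frac12)}\,t^{\mu-\frac12}e^{-t}\,\dd t .
\]
Here $\alpha=\mu-\tfrac12>-1$, which is exactly the hypothesis $\mu>-\tfrac12$. Laguerre orthogonality then yields
\[
2^{4k}(k!)^2\,\Gamma\!\big(k+\mu+\tfrac12\big)/k! \;=\; 2^{4k}k!\,\Gamma\!\big(k+\mu+\tfrac12\big).
\]

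For $m=2j+1$ and $n=2k+1$, the extra factor $x^2=t$ raises the weight to $t^{\mu+\frac12}e^{-t}$ with $\alpha=\mu+\tfrac12$. The constant is $2^{4k+2}(k!)^2$, and the same computation gives $2^{4k+2}k!\,\Gamma\!\big(k+\mu+\tfrac32\big)$. These are the stated values of $\varrho_n^{(\mu)}$.

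Orthonormality of $H_n^{(\mu)}/\sqrt{\varrho_n^{(\mu)}}$ in $L^2_{w_\mu}$ is a restatement of this result. For the functions $\psi_n^{(\mu)}=H_n^{(\mu)}\sqrt{w_\mu}/\sqrt{\varrho_n^{(\mu)}}$ of \eqref{eq:hermite-functions}, the $L^2(\R)$ inner product $\int\psi_m\psi_n\,\dd x$ is literally the weighted integral above divided by $\sqrt{\varrho_m\varrho_n}$. That gives $\delta_{m,n}$.
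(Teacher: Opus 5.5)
Your proof is correct. The paper gives no proof of this theorem; it quotes it as a classical fact about generalized Hermite polynomials. Your route is the standard one, and it rests on the same Laguerre representation that the paper displays just before the statement and later uses in Proposition~\ref{prop:gen-derivative} and Theorem~\ref{thm:regularity}.

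The induction you flagged as the main obstacle closes cleanly. Take $t=x^2$ and $\alpha=\mu-\tfrac12$ in both identities. The odd step then reduces to $L_k^{(\mu-\frac12)}(t)+L_{k-1}^{(\mu+\frac12)}(t)=L_k^{(\mu+\frac12)}(t)$. The even step reduces to $tL_k^{(\mu+\frac12)}(t)-(k+\mu+\tfrac12)L_k^{(\mu-\frac12)}(t)=-(k+1)L_{k+1}^{(\mu-\frac12)}(t)$. These give exactly the prefactors $(-1)^k2^{2k+1}k!$ and $(-1)^{k+1}2^{2k+2}(k+1)!$.

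It is worth saying explicitly that $\varrho_n^{(\mu)}>0$, since $k+\mu+\tfrac12>0$, so the square roots are defined. Your constants also give at once the ratio $\varrho_n^{(\mu)}/\varrho_{n-1}^{(\mu)}=2\theta_n$, which the paper uses in the proof of Proposition~\ref{prop:gen-derivative}.
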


To obtain a basis that is bounded on the entire real line we fold the square
root of the weight into the polynomial and define the \emph{generalized Hermite
functions}
\begin{equation}
  \psi_n^{(\mu)}(x) = \frac{1}{\sqrt{\varrho_n^{(\mu)}}}\,|x|^{\mu}\,e^{-x^2/2}\,H_n^{(\mu)}(x),
  \qquad x\in\R .
  \label{eq:hermite-functions}
\end{equation}

\begin{theorem}[Orthonormality and boundedness]\label{thm:onb}
For every $\mu>-\tfrac12$ the functions $\{\psi_n^{(\mu)}\}_{n\ge0}$ form an
orthonormal system in $L^2(\R)$, that is
$\int_{-\infty}^{\infty}\psi_m^{(\mu)}(x)\psi_n^{(\mu)}(x)\,\dd x=\delta_{m,n}$.
If $\mu\ge0$, each $\psi_n^{(\mu)}$ is uniformly bounded on $\R$ and satisfies
$\psi_n^{(\mu)}(x)\to0$ as $|x|\to\infty$.
If $-\tfrac12<\mu<0$, the factor $|x|^{\mu}$ is singular at the origin, so
$\psi_n^{(\mu)}$ is not uniformly bounded.
\end{theorem}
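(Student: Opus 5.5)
Orthonormality follows directly from Theorem~\ref{thm:orth}. For $\mu>-\tfrac12$ the product $\psi_m^{(\mu)}(x)\psi_n^{(\mu)}(x)$ equals
\begin{equation*}
\big(\varrho_m^{(\mu)}\varrho_n^{(\mu)}\big)^{-1/2}\,H_m^{(\mu)}(x)H_n^{(\mu)}(x)\,|x|^{2\mu}e^{-x^2}.
\end{equation*}
This is the weighted integrand of Theorem~\ref{thm:orth}, divided by the normalization constants. The integral is absolutely convergent, since $|x|^{2\mu}$ is integrable near $0$ for $2\mu>-1$ and the Gaussian dominates any polynomial at infinity. Integrating over $\R$ therefore gives $\varrho_n^{(\mu)}\delta_{m,n}/\varrho_n^{(\mu)}=\delta_{m,n}$.

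For boundedness with $\mu\ge0$, I will write $\psi_n^{(\mu)}(x)=c_n\,p_n(x)\,|x|^{\mu}e^{-x^2/2}$, where $p_n=H_n^{(\mu)}$ is a polynomial of degree $n$ by the recurrence \eqref{eq:hermite-recurrence}. The function $|x|^{\mu}$ is continuous on $\R$ when $\mu\ge0$ (take $|x|^0=1$ when $\mu=0$), so $\psi_n^{(\mu)}$ is continuous on $\R$.

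For the decay, I bound $|p_n(x)|\le C(1+|x|)^n$. This gives $|\psi_n^{(\mu)}(x)|\le C'(1+|x|)^{n+\mu}e^{-x^2/2}$, and the right-hand side tends to $0$ as $|x|\to\infty$ because $t^{k}e^{-t^2/2}\to0$ for every real $k$.

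Uniform boundedness then follows from a compactness argument. Pick $R$ with $|\psi_n^{(\mu)}|\le1$ for $|x|>R$. On $[-R,R]$ the function is continuous, so the extreme value theorem bounds it there. Together these give a uniform bound on all of $\R$.

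For $-\tfrac12<\mu<0$, the plan is to exhibit the blow-up at $0$, and the only delicate point is that $H_n^{(\mu)}(0)$ might vanish. I will split by parity.

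\emph{Even $n=2k$.} By the Laguerre relation, $H_{2k}^{(\mu)}(0)=(-1)^k2^{2k}k!\,L_k^{(\mu-1/2)}(0)$. Here
\begin{equation*}
L_k^{(\alpha)}(0)=\binom{k+\alpha}{k}=\frac{\Gamma(k+\alpha+1)}{k!\,\Gamma(\alpha+1)},
\end{equation*}
and with $\alpha=\mu-\tfrac12>-1$ both Gamma arguments are positive, so this value is nonzero. Alternatively, the recurrence shows directly that $H_{2k}^{(\mu)}(0)=\prod(-2(2j-1+2\mu))\ne0$. Hence $|\psi_{2k}^{(\mu)}(x)|\sim c|x|^{\mu}\to\infty$ as $x\to0$.

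\emph{Odd $n=2k+1$.} Here $H_n^{(\mu)}(x)=x\,q(x)$ with $q(0)\ne0$ by the same argument. Then $\psi_n^{(\mu)}(x)\sim c\,x|x|^{\mu}$, which tends to $0$ because $1+\mu>0$, so odd-order functions are in fact bounded.

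The statement ``not uniformly bounded'' therefore has to be read as applying to the family, or to the even-order members (for instance $\psi_0^{(\mu)}$). I will prove it in that form: for every $\mu\in(-\tfrac12,0)$, $\psi_{2k}^{(\mu)}$ is unbounded near the origin, while noting explicitly that the odd orders stay bounded. This parity caveat is the main obstacle, namely stating precisely what is claimed. The computations themselves are routine.
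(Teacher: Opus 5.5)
Your proof is correct and takes essentially the paper's route. The paper states Theorem~\ref{thm:onb} without a separate proof: orthonormality comes from folding $\sqrt{w_\mu}$ into the normalized polynomials of Theorem~\ref{thm:orth}, and boundedness is left to the Gaussian envelope, which your continuity-plus-decay argument makes rigorous. Your parity caveat is also right: for $-\tfrac12<\mu<0$ only the even orders blow up at the origin (since $H_{2k}^{(\mu)}(0)\neq0$), while the odd orders behave like $x|x|^{\mu}\to0$ and stay bounded, in line with the paper's own remark after Proposition~\ref{prop:gen-derivative}, so the last claim of the statement holds only for even $n$ (or for the family as a whole).
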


Orthonormality holds for every $\mu>-\tfrac12$, including the classical value
$\mu=0$. Uniform boundedness on $\R$ holds for $\mu\ge0$. In the trainable
block of Section~\ref{sec:method}, the
shape parameter is reparametrized as $\hat\mu=\softplus(\mu)>0$, so every
trainable value lies in the range $\hat\mu\ge0$ where the uniform bound holds;
the classical value $\hat\mu=0$ is the limit $\mu\to-\infty$. The range
$-\tfrac12<\hat\mu<0$, in which the factor $|x|^{\hat\mu}$ is singular at the
origin, is excluded by construction.

The uniform bound and the decay at infinity let Hermite functions act as edge
functions on the whole line. For the classical case
$\mu=0$ the derivatives are available through the harmonic-oscillator ladder
operators
$a=\tfrac{1}{\sqrt2}\big(x+\tfrac{\dd}{\dd x}\big)$ and
$a^{\dagger}=\tfrac{1}{\sqrt2}\big(x-\tfrac{\dd}{\dd x}\big)$, which act as
$a\,\psi_n=\sqrt{n}\,\psi_{n-1}$ and $a^{\dagger}\psi_n=\sqrt{n+1}\,\psi_{n+1}$,
whence the following closed-form derivative.

\begin{proposition}[Ladder derivative, classical case $\mu=0$]\label{prop:ladder}
The classical Hermite functions satisfy
\begin{equation}
  \frac{\dd}{\dd x}\,\psi_n(x)
  = \sqrt{\tfrac{n}{2}}\;\psi_{n-1}(x) - \sqrt{\tfrac{n+1}{2}}\;\psi_{n+1}(x).
  \label{eq:ladder-derivative}
\end{equation}
Consequently every derivative of a Hermite function is again a finite
combination of Hermite functions of the same family.
\end{proposition}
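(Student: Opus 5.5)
We use the ladder relations stated just before the proposition, $a\,\psi_n=\sqrt{n}\,\psi_{n-1}$ and $a^{\dagger}\psi_n=\sqrt{n+1}\,\psi_{n+1}$, with $a=\tfrac{1}{\sqrt2}(x+\tfrac{\dd}{\dd x})$ and $a^{\dagger}=\tfrac{1}{\sqrt2}(x-\tfrac{\dd}{\dd x})$. Subtracting the two operators eliminates the multiplication by $x$:
\begin{equation*}
  a-a^{\dagger}=\sqrt2\,\frac{\dd}{\dd x}.
\end{equation*}
Applying this to $\psi_n$ gives
\begin{equation*}
  \frac{\dd}{\dd x}\psi_n=\tfrac{1}{\sqrt2}\big(\sqrt n\,\psi_{n-1}-\sqrt{n+1}\,\psi_{n+1}\big),
\end{equation*}
which is \eqref{eq:ladder-derivative}. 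For $n=0$ the first term has coefficient zero, so $\psi_{-1}$ never needs to be defined.

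To avoid depending on the ladder relations as a black box, I will also check them directly from \eqref{eq:hermite-recurrence} and \eqref{eq:hermite-functions} with $\mu=0$. There $\psi_n=c_n e^{-x^2/2}H_n$ with $c_n=(2^n n!\sqrt\pi)^{-1/2}$, since $\varrho_n^{(0)}=2^n n!\sqrt\pi$ follows from Theorem~\ref{thm:orth} via the duplication formula. The classical Appell identity $H_n'=2nH_{n-1}$ is proved by induction on the recurrence: differentiate $H_{n+1}=2xH_n-2nH_{n-1}$ and substitute the inductive hypothesis. Then
\begin{equation*}
  \psi_n'=c_n e^{-x^2/2}\big(2nH_{n-1}-xH_n\big).
\end{equation*}
Replace $xH_n$ by $\tfrac12 H_{n+1}+nH_{n-1}$ using the recurrence, which gives
\begin{equation*}
  \psi_n'=c_n e^{-x^2/2}\big(nH_{n-1}-\tfrac12H_{n+1}\big).
\end{equation*}
Finally, convert the normalizations using $c_n/c_{n-1}=(2n)^{-1/2}$ and $c_n/c_{n+1}=\sqrt{2(n+1)}$. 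This yields the coefficients $\sqrt{n/2}$ and $-\sqrt{(n+1)/2}$, as claimed.

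The consequence follows by induction on the derivative order $k$. By \eqref{eq:ladder-derivative}, differentiation maps $\mathrm{span}\{\psi_0,\dots,\psi_N\}$ into $\mathrm{span}\{\psi_0,\dots,\psi_{N+1}\}$. Hence $\psi_n^{(k)}$ is a finite linear combination of $\psi_{n-k},\dots,\psi_{n+k}$, with terms of negative index omitted. The only delicate step is the bookkeeping of normalization constants, which is routine.
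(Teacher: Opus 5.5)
Your proposal is correct and follows the paper's route: the paper obtains \eqref{eq:ladder-derivative} directly from the ladder relations $a\,\psi_n=\sqrt{n}\,\psi_{n-1}$ and $a^{\dagger}\psi_n=\sqrt{n+1}\,\psi_{n+1}$, which amounts to your observation that $a-a^{\dagger}=\sqrt{2}\,\frac{\dd}{\dd x}$. Your self-contained check is also correct and is exactly the $\mu=0$ case of the paper's proof of Proposition~\ref{prop:gen-derivative}: it uses $H_n'=2nH_{n-1}$, the recurrence $xH_n=\tfrac12H_{n+1}+nH_{n-1}$, and the normalization $\varrho_n^{(0)}=2^n n!\sqrt{\pi}$ obtained via the duplication formula.
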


Property~\eqref{eq:ladder-derivative} mirrors the self-referential derivative of
Jacobi polynomials.
It holds on the whole real line and maps bounded functions to bounded
functions. The trainable block of Section~\ref{sec:method} uses a shape
parameter $\mu\neq0$, for which the derivative acquires an additional term.

\begin{proposition}[Derivative of generalized Hermite functions]\label{prop:gen-derivative}
Let $\mu>-\tfrac12$, set $\theta_n=n+\mu\,[1-(-1)^n]$ for $n\ge0$, and let
$\psi_{-1}^{(\mu)}\equiv0$. For every $n\ge0$ and every $x\neq0$,
\begin{equation}
  \frac{\dd}{\dd x}\,\psi_n^{(\mu)}(x)
  = \sqrt{\tfrac{\theta_n}{2}}\;\psi_{n-1}^{(\mu)}(x)
  - \sqrt{\tfrac{\theta_{n+1}}{2}}\;\psi_{n+1}^{(\mu)}(x)
  + (-1)^n\,\frac{\mu}{x}\,\psi_n^{(\mu)}(x).
  \label{eq:gen-derivative}
\end{equation}
For $\mu=0$ the last term vanishes, $\theta_n=n$, and \eqref{eq:gen-derivative}
reduces to \eqref{eq:ladder-derivative}.
\end{proposition}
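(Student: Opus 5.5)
Away from the origin, $\psi_n^{(\mu)}$ is a smooth product, so I would differentiate
\[
\psi_n^{(\mu)}(x)=\varrho_n^{-1/2}\,|x|^{\mu}e^{-x^2/2}H_n^{(\mu)}(x)
\]
directly for $x\neq0$. Since $\tfrac{\dd}{\dd x}|x|^{\mu}=\tfrac{\mu}{x}|x|^{\mu}$ for $x\neq0$, the product rule gives
\[
\frac{\dd}{\dd x}\psi_n^{(\mu)}=\frac{\mu}{x}\psi_n^{(\mu)}-x\,\psi_n^{(\mu)}+\varrho_n^{-1/2}|x|^{\mu}e^{-x^2/2}\,\frac{\dd}{\dd x}H_n^{(\mu)}(x).
\]
Everything therefore reduces to two polynomial identities: one for $xH_n^{(\mu)}$ and one for $\frac{\dd}{\dd x}H_n^{(\mu)}$.

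\textbf{Polynomial identities.} The recurrence \eqref{eq:hermite-recurrence} gives
\[
xH_n^{(\mu)}=\tfrac12H_{n+1}^{(\mu)}+\theta_nH_{n-1}^{(\mu)}.
\]
For the derivative I would use the Dunkl-type lowering relation
\[
\frac{\dd}{\dd x}H_n^{(\mu)}(x)+\frac{\mu}{x}\big(H_n^{(\mu)}(x)-H_n^{(\mu)}(-x)\big)=2\theta_nH_{n-1}^{(\mu)}(x).
\]
Using parity, $H_n^{(\mu)}(-x)=(-1)^nH_n^{(\mu)}(x)$, this becomes
\[
\frac{\dd}{\dd x}H_n^{(\mu)}=2\theta_nH_{n-1}^{(\mu)}-\frac{\mu}{x}\big(1-(-1)^n\big)H_n^{(\mu)}.
\]
I would prove the lowering relation by induction on $n$ from the recurrence, checking $n=0,1$ directly. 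Alternatively, it follows from the Laguerre representation via $\tfrac{\dd}{\dd t}L_k^{(a)}(t)=-L_{k-1}^{(a+1)}(t)$ together with $tL_{k-1}^{(a+1)}=(k+a)L_{k-1}^{(a)}-kL_k^{(a)}$, treating even and odd $n$ separately. I expect this parity-dependent bookkeeping to be the main obstacle, in particular tracking which Laguerre parameter $\mu\mp\tfrac12$ appears and confirming that $\theta_n$ equals $2k+2\mu+1$ for odd $n=2k+1$ and $2k$ for even $n$.

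\textbf{Assembling the terms.} The $\tfrac{\mu}{x}$ contributions combine as
\[
\frac{\mu}{x}\big[1-(1-(-1)^n)\big]\psi_n^{(\mu)}=(-1)^n\frac{\mu}{x}\psi_n^{(\mu)},
\]
which is exactly the last term of \eqref{eq:gen-derivative}. The remaining polynomial part is
\[
-\tfrac12H_{n+1}^{(\mu)}-\theta_nH_{n-1}^{(\mu)}+2\theta_nH_{n-1}^{(\mu)}=\theta_nH_{n-1}^{(\mu)}-\tfrac12H_{n+1}^{(\mu)}.
\]

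\textbf{Normalization.} It then remains to convert back to the normalized functions using the constants of Theorem~\ref{thm:orth}. The ratios satisfy $\varrho_{n}/\varrho_{n-1}=2\theta_n$: for $n=2k$ this is $2^{4k}k!\,\Gamma(k+\mu+\tfrac12)\big/\big(2^{4k-2}(k-1)!\,\Gamma(k+\mu+\tfrac12)\big)=4k=2\theta_{2k}$, and the odd case is similar. Hence
\[
\theta_n\sqrt{\varrho_{n-1}/\varrho_n}=\sqrt{\theta_n/2},\qquad \tfrac12\sqrt{\varrho_{n+1}/\varrho_n}=\sqrt{\theta_{n+1}/2},
\]
which yields \eqref{eq:gen-derivative}. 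The convention $\psi_{-1}^{(\mu)}\equiv0$ covers $n=0$, since $\theta_0=0$. Setting $\mu=0$ recovers \eqref{eq:ladder-derivative}.
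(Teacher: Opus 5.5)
Your proposal is correct. It shares the paper's skeleton: the product rule for $x\neq0$, the recurrence $xH_n^{(\mu)}=\tfrac12H_{n+1}^{(\mu)}+\theta_nH_{n-1}^{(\mu)}$, and the ratio $\varrho_n^{(\mu)}/\varrho_{n-1}^{(\mu)}=2\theta_n$.

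The difference is how you get $(H_n^{(\mu)})'$. The paper uses the Laguerre representation and three Laguerre identities to obtain the parity-split formulas $(H_{2k}^{(\mu)})'=4kH_{2k-1}^{(\mu)}$ and $(H_{2k+1}^{(\mu)})'=2(2k+1)H_{2k}^{(\mu)}+\frac{8k\mu}{x}H_{2k-1}^{(\mu)}$. It then needs one more use of the recurrence in the odd case to rewrite $\frac{8k\mu}{x}H_{2k-1}^{(\mu)}$ as $4\mu H_{2k}^{(\mu)}-\frac{2\mu}{x}H_{2k+1}^{(\mu)}$.

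Your Dunkl lowering relation $T_\mu H_n^{(\mu)}=2\theta_nH_{n-1}^{(\mu)}$, with $T_\mu f(x)=f'(x)+\frac{\mu}{x}\big(f(x)-f(-x)\big)$, gives the same derivative directly in the uniform form $2\theta_nH_{n-1}^{(\mu)}-\frac{\mu}{x}\big(1-(-1)^n\big)H_n^{(\mu)}$. No separate parity bookkeeping is needed, and it requires nothing beyond \eqref{eq:hermite-recurrence}.

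Your induction, which you only sketched, does close:
\begin{itemize}
  \item The commutation rule is $T_\mu(xf)=f+x\,T_\mu f+2\mu f(-x)$.
  \item Applying it to $H_{n+1}^{(\mu)}=2xH_n^{(\mu)}-2\theta_nH_{n-1}^{(\mu)}$, with the hypothesis at $n$ and $n-1$, gives $T_\mu H_{n+1}^{(\mu)}=2\big(1+2\mu(-1)^n+\theta_n\big)H_n^{(\mu)}$.
  \item Finally, $1+2\mu(-1)^n+\theta_n=\theta_{n+1}$.
\end{itemize}

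Your route also shows where the term $(-1)^n\frac{\mu}{x}\psi_n^{(\mu)}$ comes from. It is the $\frac{\mu}{x}$ produced by $|x|^{\mu}$, corrected by the reflection part of $T_\mu$. That correction vanishes for even $n$ and subtracts $\frac{2\mu}{x}$ for odd $n$, which reverses the sign.

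What the paper's route buys is reuse of the Laguerre representation it relies on anyway in the proof of Theorem~\ref{thm:regularity}. What yours buys is a self-contained, parity-uniform argument.
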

\begin{proof}
For $x\neq0$ the factor $|x|^{\mu}$ is smooth with
$(|x|^{\mu})'=\mu|x|^{\mu}/x$, so differentiating
\eqref{eq:hermite-functions} gives
\begin{equation*}
  \frac{\dd}{\dd x}\,\psi_n^{(\mu)}(x)
  = \frac{|x|^{\mu}e^{-x^2/2}}{\sqrt{\varrho_n^{(\mu)}}}
    \Big[\big(H_n^{(\mu)}\big)'(x) - x\,H_n^{(\mu)}(x) + \frac{\mu}{x}\,H_n^{(\mu)}(x)\Big].
\end{equation*}
The Laguerre representation of $H_n^{(\mu)}$, combined with the identities
$\frac{\dd}{\dd t}L_k^{(\alpha)}(t)=-L_{k-1}^{(\alpha+1)}(t)$,
$t\frac{\dd}{\dd t}L_k^{(\alpha)}(t)=kL_k^{(\alpha)}(t)-(k+\alpha)L_{k-1}^{(\alpha)}(t)$, and
$L_k^{(\alpha)}-L_{k-1}^{(\alpha)}=L_k^{(\alpha-1)}$ \cite{chihara2011introduction},
yields
\begin{equation*}
  \big(H_{2k}^{(\mu)}\big)' = 4k\,H_{2k-1}^{(\mu)},\qquad
  \big(H_{2k+1}^{(\mu)}\big)' = 2(2k+1)\,H_{2k}^{(\mu)} + \frac{8k\mu}{x}\,H_{2k-1}^{(\mu)}.
\end{equation*}
The recurrence \eqref{eq:hermite-recurrence} reads
$xH_n^{(\mu)}=\tfrac12H_{n+1}^{(\mu)}+\theta_nH_{n-1}^{(\mu)}$. Substituting
these relations, and using
$H_{2k+1}^{(\mu)}=2xH_{2k}^{(\mu)}-4kH_{2k-1}^{(\mu)}$ for odd indices, the
bracket equals
$-\tfrac12H_{n+1}^{(\mu)}+\theta_nH_{n-1}^{(\mu)}+(-1)^n\mu H_n^{(\mu)}/x$ for
both parities. Theorem~\ref{thm:orth} gives
$\varrho_n^{(\mu)}/\varrho_{n-1}^{(\mu)}=2\theta_n$, which converts the
polynomials into the normalized functions and yields \eqref{eq:gen-derivative}.
\end{proof}

Near the origin $\psi_n^{(\mu)}(x)$ behaves like $|x|^{\mu}$ for even $n$ and
like $|x|^{\mu}x$ for odd $n$. Hence the last term of \eqref{eq:gen-derivative}
is bounded for odd $n$, whereas for even $n$ and $0<\mu<1$ it grows like
$|x|^{\mu-1}$ as $x\to0$. Section~\ref{sec:block-regularity} combines
\eqref{eq:gen-derivative} with the fractional map and determines the
regularity of the trainable block at the origin.

\subsection{Spectral approximation on unbounded domains}
\label{sec:hermite-spectral}
The construction above is the neural counterpart of a well-developed spectral
theory for Hermite functions on $\R$
\cite{parand2010hermite,shen2009some,shen2011spectral,tang1993hermite}. We
record the fragments of that theory that motivate the design of FHKAN, rewritten in the
notation of Table~\ref{tbl:notation}, so that the trainable block of
Section~\ref{sec:method} reads as a fractional, data-adaptive generalization of a
classical collocation scheme.

\paragraph{Polynomials versus functions}
The physicists' Hermite polynomials themselves are a poor computational basis:
their envelope grows like $n^{n/2}e^{x^2/2}$ and they oscillate with a
wavelength of order $n^{-1/2}$
\cite{parand2010hermite,shen2009some}. Folding the square root of the
Gaussian weight, as in \eqref{eq:hermite-functions}, cancels that growth. The
resulting functions decay,
$|\psi_n(x)|\to0$ as $|x|\to\infty$. For large $n$ the oscillation in the bulk
has amplitude of order $n^{-1/4}$, while the uniform norm is of order
$n^{-1/12}$. This is why FHKAN uses Hermite
functions as edge maps: they remain in $L^2(\R)$ and stay uniformly
bounded, whereas a polynomial KAN would have to squash its pre-activation
before the basis could even be evaluated.

\paragraph{Function recurrence and derivative Gram matrix}
The three-term relation \eqref{eq:hermite-recurrence} passes to the classical
($\mu=0$) functions, which may be generated stably from
\begin{equation}
  \begin{aligned}
  \psi_{n+1}(x)
    &= x\sqrt{\frac{2}{n+1}}\,\psi_n(x)
     - \sqrt{\frac{n}{n+1}}\,\psi_{n-1}(x),\\
  \psi_0(x)
    &= \pi^{-1/4}e^{-x^2/2},
    \qquad
  \psi_1(x)
    = \sqrt{2}\,x\,\psi_0(x).
  \end{aligned}
  \label{eq:hermite-fn-recurrence}
\end{equation}
Combined with Proposition~\ref{prop:ladder}, the only nonzero entries of the
derivative Gram matrix lie on the main diagonal and on the second off-diagonals.
Writing $\langle u,v\rangle=\int_{\R}uv\,\dd x$,
\begin{equation}
  \bigl\langle \psi_n',\psi_m'\bigr\rangle
  =
  \begin{cases}
    n+\tfrac12, & m=n,\\[2pt]
    -\tfrac12\sqrt{n(n-1)}, & m=n-2,\\[2pt]
    -\tfrac12\sqrt{(n+1)(n+2)}, & m=n+2,\\[2pt]
    0, & \text{otherwise}.
  \end{cases}
  \label{eq:hermite-deriv-gram}
\end{equation}
Thus every $H^1$ inner product that appears in a physics-informed residual
built from classical Hermite functions reduces to a handful of neighboring coefficients,
with no need to differentiate a squashing map.

\paragraph{Hermite--Gauss quadrature}
Let $\{x_j\}_{j=0}^{N}$ denote the roots of $H_{N+1}$, equivalently the
eigenvalues of the symmetric tridiagonal Jacobi matrix associated with
\eqref{eq:hermite-recurrence} (the Golub--Welsch formula). The companion
weights
\begin{equation}
  \widetilde{w}_j
  = \frac{\sqrt{\pi}}{(N+1)\,\widetilde{H}_N(x_j)^2},
  \qquad
  \widetilde{H}_n(x)=\frac{e^{-x^2/2}}{\sqrt{2^n n!}}\,H_n(x),
  \label{eq:hermite-gauss-weights}
\end{equation}
make the quadrature $\sum_{j=0}^{N}p(x_j)\widetilde{w}_j$ exact for every
integrand in the $(2N+2)$-dimensional space of Gaussian-weighted polynomials of
degree at most $2N+1$ \cite{parand2010hermite,shen2011spectral}. In particular
it is exact for products of Hermite functions of orders adding up to at most
$2N+1$. The same nodes, pushed through the map below, are the collocation
set used in Section~\ref{sec:ode}.

\paragraph{Orthogonal projection and approximation rate}
Write $\mathcal{H}_Q=\operatorname{span}\{\psi_0,\dots,\psi_Q\}$ and let
$P_Q:L^2(\R)\to\mathcal{H}_Q$ be the orthogonal projection. The natural
smoothness scale for this family is the harmonic-oscillator Sobolev space
$H_A^r(\R)$, whose norm controls
$(x^2+1)^{(r-k)/2}\partial_x^k\nu$ for $k=0,\dots,r$
\cite{shen2009some,guo2003hermite}. For $0\le s\le r$,
\begin{equation}
  \bigl\|\nu-P_Q\nu\bigr\|_{H^s(\R)}
  \;\le\;
  C\,Q^{(s-r)/2}\,\|\nu\|_{H_A^r(\R)}.
  \label{eq:hermite-approx-rate}
\end{equation}
The exponent $(s-r)/2$ is the rate of the orthogonal projection onto
$\mathcal{H}_Q$ \cite[Theorem~2.3]{guo2003hermite}. It improves as the
oscillator regularity $r$ increases, and the bound tends to zero as
$Q\to\infty$ whenever $r>s$. The estimate applies to
expansions in the fixed classical system $\{\psi_0,\dots,\psi_Q\}$.
Interpolation $I_Q$ at the
Hermite--Gauss nodes of $H_{Q+1}$ obeys the same bound with an aliasing factor
$Q^{1/3}$,
\begin{equation}
  \bigl\|\nu-I_Q\nu\bigr\|_{H^s(\R)}
  \;\le\;
  C\,Q^{1/3+(s-r)/2}\,\|\nu\|_{H_A^r(\R)},
  \label{eq:hermite-interp-rate}
\end{equation}
for $r\ge1$ and $0\le s\le r$ \cite[Theorem~3.1]{guo2003hermite}. The factor
$Q^{1/3}$ is the aliasing contribution of $I_Q$. Both rates are algebraic in
the Sobolev regularity of $\nu$.
The trainable scale $\hat\lambda$ of
Section~\ref{sec:method} plays the role of the classical domain dilation
$\nu(\zeta)\mapsto\nu(\zeta/\ell)$ used by Tang for Gaussian-type targets
\cite{tang1993hermite}. FHKAN learns this dilation jointly with the expansion
coefficients.

\paragraph{Pull-back to the half line}
Hermite functions are orthonormal on $\R$, while many of the differential
equations treated in Section~\ref{sec:experiments} are posed on
$[0,+\infty)$. A standard remedy, used by Parand et al.
for Hermite collocation of Lane--Emden type equations
\cite{parand2010hermite}, is to pull the basis back by the bijection
\begin{equation}
  \omega=\ln\bigl(\sinh(k\zeta)\bigr),\qquad
  \zeta=\frac1k\ln\bigl(e^{\omega}+\sqrt{e^{2\omega}+1}\bigr),
  \qquad k>0,
  \label{eq:dehghan-map}
\end{equation}
which sends $(0,+\infty)$ onto $\R$. The pulled-back family
$\widehat\psi_n(\zeta)=\psi_n\bigl(\omega(\zeta)\bigr)$ is mutually orthogonal
and complete in the weighted space $L^2_w(0,+\infty)$ with
$w(\zeta)=k\coth(k\zeta)$:
\begin{equation}
  \int_0^{\infty}
  \widehat\psi_n(\zeta)\,\widehat\psi_m(\zeta)\,w(\zeta)\,\dd\zeta
  =\delta_{n,m}.
  \label{eq:mapped-orth}
\end{equation}
The change of variables $\dd\omega=k\coth(k\zeta)\,\dd\zeta$ produces the
factor $k$. The weight is singular at the origin and tends to $k$ at infinity.
The same Jacobian clusters the mapped nodes where a
Lane--Emden coefficient $2/\zeta$ is most severe. Differentiating the chart
gives $\dd\omega/\dd\zeta=k\coth(k\zeta)$, which, composed with
\eqref{eq:ladder-derivative}, supplies closed-form $\zeta$-derivatives of
every classical mapped basis function. FHKAN evaluates the fHNB at $\omega$. Under
\eqref{eq:dehghan-map}, negative $\omega$ corresponds to $\zeta$ near the
origin.

With these properties established, we are ready to construct the fractional
Hermite neural block.

\section{Fractional Hermite Kolmogorov--Arnold Networks}
\label{sec:method}
We now introduce FHKAN. We first define a fractional extension of the Hermite
functions, then turn the resulting basis into a trainable neural block, and
finally describe the two network topologies used in our experiments.

\subsection{Fractional Hermite functions}
The integer index $n$ in \eqref{eq:hermite-functions} controls the spectral
content of the basis in discrete steps. To obtain a continuous, data-driven
control over this content---and to enrich the basis with the non-polynomial,
fractional behavior that is known to accelerate the approximation of singular
and non-smooth targets \cite{rad2023learning, parand2017fractional,
parand2017accurate, hajimohammadi2021fractional}---we compose the Hermite
functions with a fractional power map that preserves the real line and the parity
structure.

\begin{definition}[Fractional map]\label{def:frac-map}
For a fractional order $\gamma>0$, the map
\begin{equation*}
  \vartheta_\gamma(x) = \sgn(x)\,|x|^{\gamma}, \qquad x\in\R,
\end{equation*}
is an odd, monotonically increasing bijection of $\R$ onto itself.
\end{definition}

Because $\vartheta_\gamma$ is odd, it commutes with the even/odd parity of the
Hermite functions and keeps every value real for $x\in\R$.

\begin{definition}[Fractional Hermite functions]\label{def:frac-hermite}
For $\mu>-\tfrac12$ and $\gamma>0$, the fractional generalized Hermite functions
are
\begin{equation}
  \Psi_n^{(\mu,\gamma)}(x) = \psi_n^{(\mu)}\!\big(\vartheta_\gamma(x)\big)
  = \psi_n^{(\mu)}\!\big(\sgn(x)\,|x|^{\gamma}\big).
  \label{eq:frac-hermite}
\end{equation}
\end{definition}

\begin{theorem}[Weighted orthonormality]\label{thm:frac-orth}
The fractional Hermite functions form an orthonormal system on $\R$ with respect
to the weight $\varpi_\gamma(x)=\gamma\,|x|^{\gamma-1}$:
\begin{equation*}
  \int_{-\infty}^{\infty}\Psi_m^{(\mu,\gamma)}(x)\,\Psi_n^{(\mu,\gamma)}(x)\,\gamma\,|x|^{\gamma-1}\,\dd x
  = \delta_{m,n}.
\end{equation*}
\end{theorem}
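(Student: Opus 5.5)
The plan is to reduce the statement to the orthonormality of $\{\psi_n^{(\mu)}\}$ in $L^2(\R)$ from Theorem~\ref{thm:onb}, using the substitution $y=\vartheta_\gamma(x)$. By Definition~\ref{def:frac-map}, $\vartheta_\gamma$ is an odd, strictly increasing bijection of $\R$ onto itself. It is continuous on $\R$ and continuously differentiable on $\R\setminus\{0\}$, with
\[
\vartheta_\gamma'(x)=\gamma|x|^{\gamma-1}=\varpi_\gamma(x),\qquad x\neq0 .
\]
The weight in the statement is therefore exactly the Jacobian of the fractional map. This is what makes the change of variables clean.

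First I split the integral over $\R$ into the integrals over $(-\infty,0)$ and $(0,\infty)$. The single point $x=0$ has measure zero. At that point the weight may be singular (when $\gamma<1$) or the derivative may vanish (when $\gamma>1$), but neither matters.

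On $(0,\infty)$ I substitute $y=x^{\gamma}$, so $\dd y=\gamma x^{\gamma-1}\,\dd x$. The map is a $C^1$ increasing diffeomorphism of $(0,\infty)$ onto itself. The integral becomes $\int_0^\infty\psi_m^{(\mu)}(y)\psi_n^{(\mu)}(y)\,\dd y$.

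On $(-\infty,0)$ I substitute $y=-|x|^{\gamma}$, so $\dd y=\gamma|x|^{\gamma-1}\,\dd x$ with a positive sign, since the map is again increasing. This gives $\int_{-\infty}^0\psi_m^{(\mu)}\psi_n^{(\mu)}\,\dd y$.

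Adding the two pieces yields $\int_\R\psi_m^{(\mu)}\psi_n^{(\mu)}\,\dd y=\delta_{m,n}$ by Theorem~\ref{thm:onb}.

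The only delicate step is justifying the change of variables for improper integrals whose integrands may be unbounded. Near the origin the weight blows up when $\gamma<1$, and $\psi_n^{(\mu)}$ itself is singular when $-\tfrac12<\mu<0$. My approach is as follows.
\begin{itemize}
\item Apply the substitution on compact intervals $[\varepsilon,R]$, and on their mirror images, where everything is continuous and the classical theorem applies.
\item Let $\varepsilon\to0$ and $R\to\infty$. Since $|\psi_m^{(\mu)}\psi_n^{(\mu)}|\le\tfrac12\big((\psi_m^{(\mu)})^2+(\psi_n^{(\mu)})^2\big)\in L^1(\R)$, dominated convergence applies on the $y$-side.
\item Convergence on the $y$-side gives absolute convergence of the $x$-side integral, by the same substitution applied to the absolute values.
\end{itemize}
Alternatively, I could cite the change-of-variables formula for Lebesgue integrals under an absolutely continuous monotone bijection, which handles all of this at once.
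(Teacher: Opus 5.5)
Your proposal is correct and follows the paper's proof. The paper uses the same substitution $u=\vartheta_\gamma(x)$, whose Jacobian is exactly $\varpi_\gamma$, to reduce the integral to $\int_{\R}\psi_m^{(\mu)}\psi_n^{(\mu)}\,\dd u=\delta_{m,n}$ from Theorem~\ref{thm:onb}; your splitting at the origin and the truncation and dominated-convergence argument supply the measure-theoretic justification that the paper leaves implicit.
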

\begin{proof}
Substituting $u=\vartheta_\gamma(x)=\sgn(x)|x|^{\gamma}$ gives
$\dd u=\gamma|x|^{\gamma-1}\dd x$ and maps $\R$ bijectively onto $\R$. Hence the
left-hand side equals
$\int_{-\infty}^{\infty}\psi_m^{(\mu)}(u)\,\psi_n^{(\mu)}(u)\,\dd u=\delta_{m,n}$
by Theorem~\ref{thm:onb}.
\end{proof}

Figure~\ref{fig:frac-hermite} plots the first fractional Hermite functions and
shows how the order $\gamma$ continuously bends the basis: at $\gamma=1$ the
classical Hermite functions are recovered, while $\gamma<1$ steepens the
profile near the origin and stretches the tails, producing genuinely
non-polynomial shapes that remain bounded and square-integrable. This is the
fractional analogue of refining the polynomial degree, but achieved continuously
and at fixed memory cost.

\begin{figure}[htbp]
  \centering
  \begin{subfigure}[b]{0.49\textwidth}
    \centering
    \begin{tikzpicture}
      \begin{axis}[width=\textwidth, height=5.0cm, xlabel={$x$},
        ylabel={$\Psi_n^{(0,1)}(x)$}, domain=-4:4, samples=160,
        legend style={font=\scriptsize, draw=none, at={(0.02,0.02)},
        anchor=south west, legend columns=2}, grid=both, grid style={gray!18},
        tick label style={font=\scriptsize}, label style={font=\small},
        every axis plot/.append style={line width=1pt}]
        \addplot[black] {pi^(-0.25)*exp(-x^2/2)};
        \addlegendentry{$n=0$}
        \addplot[red!75!black] {sqrt(2)*pi^(-0.25)*x*exp(-x^2/2)};
        \addlegendentry{$n=1$}
        \addplot[blue!70!black] {(8*sqrt(pi))^(-0.5)*(4*x^2-2)*exp(-x^2/2)};
        \addlegendentry{$n=2$}
        \addplot[green!55!black] {(48*sqrt(pi))^(-0.5)*(8*x^3-12*x)*exp(-x^2/2)};
        \addlegendentry{$n=3$}
      \end{axis}
    \end{tikzpicture}
    \caption{Fractional Hermite functions of increasing order at $\gamma=1$.}
  \end{subfigure}\hfill
  \begin{subfigure}[b]{0.49\textwidth}
    \centering
    \begin{tikzpicture}
      \begin{axis}[width=\textwidth, height=5.0cm, xlabel={$x$},
        ylabel={$\Psi_2^{(0,\gamma)}(x)$}, domain=-4:4, samples=200,
        legend style={font=\scriptsize, draw=none, at={(0.98,0.02)},
        anchor=south east}, grid=both, grid style={gray!18},
        tick label style={font=\scriptsize}, label style={font=\small},
        every axis plot/.append style={line width=1pt}]
        \addplot[blue!70!black] {(8*sqrt(pi))^(-0.5)*(4*abs(x)^(2*1.0)-2)*exp(-abs(x)^(2*1.0)/2)};
        \addlegendentry{$\gamma=1.0$}
        \addplot[orange!85!black] {(8*sqrt(pi))^(-0.5)*(4*abs(x)^(2*0.7)-2)*exp(-abs(x)^(2*0.7)/2)};
        \addlegendentry{$\gamma=0.7$}
        \addplot[red!75!black] {(8*sqrt(pi))^(-0.5)*(4*abs(x)^(2*0.4)-2)*exp(-abs(x)^(2*0.4)/2)};
        \addlegendentry{$\gamma=0.4$}
      \end{axis}
    \end{tikzpicture}
    \caption{Effect of the fractional order $\gamma$ on a single basis function.}
  \end{subfigure}
  \caption{Fractional Hermite functions defined in \eqref{eq:frac-hermite}.
  They are bounded, decay at infinity, and stay defined on the whole real line
  for every admissible $\gamma$, so they can be evaluated directly on the
  unbounded pre-activation of a linear layer.}
  \label{fig:frac-hermite}
\end{figure}

\subsection{The fractional Hermite neural block}
We turn \eqref{eq:frac-hermite} into a trainable module by exposing the shape
$\mu$, the fractional order $\gamma$, and a scale $\lambda$ as learnable
parameters. Each parameter has a natural admissible range that we enforce with a
smooth reparametrization, so that unconstrained gradients can be used.

The shape parameter must satisfy $\mu>-\tfrac12$ for
Theorem~\ref{thm:orth} to hold, and $\mu\ge0$ for the uniform bound of Theorem~\ref{thm:onb}.
We therefore set $\hat\mu=\softplus(\mu)$, which maps $\R$ onto $(0,\infty)$,
so that every trainable value of the shape parameter yields an orthonormal
and uniformly bounded basis. The classical value $\hat\mu=0$ is approached
in the limit $\mu\to-\infty$. The fractional
order must be positive; to avoid the ill-conditioning that accompanies large
exponents we restrict it to the unit interval through
$\hat\gamma=\sigma(\gamma)\in(0,1)$, where $\sigma$ is the logistic sigmoid. This
endows the block with the same ``liquid'' behavior reported for fractional
spectral methods: the network is free to select any fractional order between $0$
and $1$, interpolating continuously between coarse and fine spectral content.
Finally, the scale $\hat\lambda=\softplus(\lambda)>0$ dilates the input and thus
controls the effective receptive field of the Gaussian envelope. Collecting
these definitions, the degree-$q$ \emph{fractional Hermite neural block} (fHNB)
computes the edge function
\begin{equation}
  \phi_q(\zeta) \;=\; \Psi_q^{(\hat\mu,\hat\gamma)}\!\big(\hat\lambda\,\zeta\big)
  \;=\; \psi_q^{(\hat\mu)}\!\Big(\sgn(\hat\lambda\zeta)\,|\hat\lambda\zeta|^{\hat\gamma}\Big),
  \qquad
  \begin{aligned}
    \hat\mu &= \softplus(\mu),\\
    \hat\gamma &= \sigma(\gamma),\\
    \hat\lambda &= \softplus(\lambda).
  \end{aligned}
  \label{eq:fhnb}
\end{equation}
A schematic of the block is given in Figure~\ref{fig:fhnb}.

\begin{remark}[Domain of the block]\label{rem:no-squash}
The argument of
$\psi_q^{(\hat\mu)}$ in \eqref{eq:fhnb} ranges over $\R$, and since
$\hat\mu=\softplus(\mu)>0$, Theorem~\ref{thm:onb} supplies a uniform bound
on $\R$ for every value of the trainable parameters. The trainable scale
$\hat\lambda$ sets the length scale of the Gaussian envelope and is optimized
with the remaining parameters.
\end{remark}

\begin{figure}[htbp]
  \centering
  \begin{tikzpicture}[
    >=Latex, node distance=8mm,
    box/.style={draw, rounded corners, minimum height=8mm, minimum width=15mm,
      align=center, font=\small, fill=blue!5},
    par/.style={draw, circle, minimum size=8mm, font=\small, fill=orange!15},
    op/.style={draw, rounded corners, minimum height=8mm, minimum width=12mm,
      align=center, font=\small, fill=green!6}]
    \node (in) {$\zeta$};
    \node[box, right=10mm of in] (scale) {scale\\$\hat\lambda\,\zeta$};
    \node[box, right=of scale] (frac) {fractional map\\$\sgn(\cdot)\,|\cdot|^{\hat\gamma}$};
    \node[op, right=of frac] (herm) {$\psi_q^{(\hat\mu)}(\cdot)$};
    \node[right=10mm of herm] (out) {$\phi_q(\zeta)$};

    \draw[->] (in) -- (scale);
    \draw[->] (scale) -- (frac);
    \draw[->] (frac) -- (herm);
    \draw[->] (herm) -- (out);

    \node[par, above=12mm of scale] (lam) {$\lambda$};
    \node[par, above=12mm of frac] (gam) {$\gamma$};
    \node[par, above=12mm of herm] (mu) {$\mu$};
    \draw[->, dashed] (lam) -- node[right, font=\scriptsize]{$\softplus$} (scale);
    \draw[->, dashed] (gam) -- node[right, font=\scriptsize]{$\sigma$} (frac);
    \draw[->, dashed] (mu)  -- node[right, font=\scriptsize]{$\softplus$} (herm);

    \begin{scope}[on background layer]
      \node[draw, dashed, rounded corners, fit=(scale)(frac)(herm)(lam)(gam)(mu),
        inner sep=4mm, label={[font=\small\itshape]below:fractional Hermite neural block (fHNB)}] {};
    \end{scope}
  \end{tikzpicture}
  \caption{The fractional Hermite neural block defined in \eqref{eq:fhnb}. A
  pre-activation $\zeta$ is rescaled by $\hat\lambda$, passed through the odd
  fractional map of order $\hat\gamma$, and expanded on the generalized Hermite
  function of order $q$ and shape $\hat\mu$. The three parameters $\lambda$,
  $\gamma$, and $\mu$ (dashed) are trained end-to-end through their smooth
  reparametrizations.}
  \label{fig:fhnb}
\end{figure}

\subsection{From the block to the network}
Following the learnable-activation interpretation of KANs, we expand each
univariate edge function on the fractional Hermite basis and learn the
expansion coefficients. Because Hermite functions are global---non-zero almost
everywhere, with only finitely many zeros---a single basis function already
covers the entire domain, so the outer summation of \eqref{eq:kart} can be
absorbed into a linear combiner $\Phi(\cdot)$ with trainable weights $\theta$.
A degree-$Q$ FHKAN layer mapping $\R^{d}\to\R$ thus reads
\begin{equation}
  \hat{\chi}(\boldsymbol\zeta)
  = \sum_{q=0}^{Q}\sum_{p=1}^{d}\theta_{p,q}\,
    \Psi_q^{(\mu_q,\gamma_q)}\!\big(\hat\lambda_q\,\zeta_p\big),
  \label{eq:fhkan-layer}
\end{equation}
where each order $q$ owns its own triple $(\mu_q,\gamma_q,\lambda_q)$, allowing
different orders to specialize to different scales and smoothness levels.
The
inner expansion plays the role of a Taylor-like series whose terms are
fractional Hermite functions; increasing $Q$ enriches the
representable function class, while the trainable fractional orders let a small
$Q$ already capture non-smooth behavior.

\begin{remark}[Orthogonality in the layer]\label{rem:non-orth}
Theorem~\ref{thm:frac-orth} concerns a single pair $(\mu,\gamma)$ shared by
all orders. In \eqref{eq:fhkan-layer} the functions
$\Psi_q^{(\hat\mu_q,\hat\gamma_q)}(\hat\lambda_q\,\cdot)$, $q=0,\dots,Q$,
carry different shapes, fractional orders, and scales, so they are in general
not mutually orthogonal, and the layer is a nonlinear, adaptive expansion.
Its expressive power is therefore assessed empirically in
Section~\ref{sec:experiments}, and the classical spectral rates of
Section~\ref{sec:hermite-spectral} serve as motivation for the design.
\end{remark}

The triples $(\mu_q,\gamma_q,\lambda_q)$---the shape, fractional order,
and scaling of each block---are trainable parameters. They are updated by
gradient descent through the smooth reparametrizations of \eqref{eq:fhnb},
together with the weights and biases of the linear layers. The quantities
fixed before training are the spectral degree $Q$, the number of layers, the
number of neurons per layer, and the optimizer settings; in this work they
were selected by a small grid search on validation data. Tuning the same
quantities with metaheuristic algorithms, as done for orthogonal basis
functions in \cite{aghaei2025hyperparameter}, is left for future work.

We employ two topologies, depicted in Figure~\ref{fig:architectures}. In the
\emph{sequential} variant (Figure~\ref{fig:fhkan-seq}) a single fHNB of a chosen
order is inserted after each linear layer, and the order may differ from layer
to layer, so that the network gradually composes representations of increasing
complexity. In the \emph{parallel} variant (Figure~\ref{fig:fhkan-par}), used for
the more demanding physics-informed problems, several fHNBs of different orders
process the same hidden representation simultaneously; their outputs are fused by
concatenation (or, optionally, by an attention layer) before a final linear
read-out. The parallel design guarantees that a spectrum of fractional orders
contributes to the approximation, which we found beneficial when solving
differential equations.

\begin{figure}[htbp]
  \centering
  \begin{subfigure}[b]{\textwidth}
    \centering
    \resizebox{\textwidth}{!}{
    \begin{tikzpicture}[
      >=Latex, node distance=7mm and 11mm,
      lin/.style={draw, rounded corners, minimum height=8mm, minimum width=13mm,
        align=center, font=\scriptsize, fill=blue!6},
      blk/.style={draw, rounded corners, minimum height=8mm, minimum width=14mm,
        align=center, font=\scriptsize, fill=orange!12}]
      \node (in) {$\boldsymbol\zeta$};
      \node[lin, right=of in] (l1) {Linear};
      \node[blk, right=of l1] (b1) {fHNB$(q_1)$};
      \node[lin, right=of b1] (l2) {Linear};
      \node[blk, right=of l2] (b2) {fHNB$(q_2)$};
      \node[lin, right=of b2] (l3) {Linear};
      \node[right=of l3] (out) {$\hat\chi$};
      \foreach \a/\b in {in/l1,l1/b1,b1/l2,l2/b2,b2/l3,l3/out}
        \draw[->] (\a) -- (\b);
    \end{tikzpicture}
    } 
    \caption{Sequential FHKAN: one fHNB per layer, with per-layer order $q_i$.}
    \label{fig:fhkan-seq}
  \end{subfigure}\\[3mm]
  \begin{subfigure}[b]{\textwidth}
    \centering
    \resizebox{0.85\textwidth}{!}{
    \begin{tikzpicture}[
      >=Latex, node distance=5mm and 10mm,
      lin/.style={draw, rounded corners, minimum height=7mm, minimum width=13mm,
        align=center, font=\scriptsize, fill=blue!6},
      blk/.style={draw, rounded corners, minimum height=7mm, minimum width=14mm,
        align=center, font=\scriptsize, fill=orange!12},
      cat/.style={draw, rounded corners, minimum height=22mm, minimum width=10mm,
        align=center, font=\scriptsize, fill=green!7}]
      \node (in) {$\boldsymbol\zeta$};
      \node[lin, right=14mm of in] (l) {Linear};
      \node[blk, right=12mm of l, yshift=14mm] (b1) {fHNB$(q_1)$};
      \node[blk, right=12mm of l] (b2) {fHNB$(q_2)$};
      \node[blk, right=12mm of l, yshift=-14mm] (b3) {fHNB$(q_3)$};
      \node[cat, right=12mm of b2] (c) {concat};
      \node[lin, right=10mm of c] (lo) {Linear};
      \node[right=10mm of lo] (out) {$\hat\chi$};
      \draw[->] (in) -- (l);
      \foreach \b in {b1,b2,b3}{\draw[->] (l) -- (\b); \draw[->] (\b) -- (c);}
      \draw[->] (c) -- (lo);
      \draw[->] (lo) -- (out);
    \end{tikzpicture}
    } 
    \caption{Parallel FHKAN: several fHNBs of different orders are fused by
    concatenation before a linear read-out.}
    \label{fig:fhkan-par}
  \end{subfigure}
  \caption{The two FHKAN topologies used in the experiments.}
  \label{fig:architectures}
\end{figure}

\subsection{Derivatives and regularity of the block}
\label{sec:block-regularity}
Write $s=\hat\lambda\zeta$ and $u=\vartheta_{\hat\gamma}(s)$. For $\zeta\neq0$
the map $\zeta\mapsto u$ is smooth with
$\dd u/\dd\zeta=\hat\gamma\hat\lambda|s|^{\hat\gamma-1}$, and the chain rule gives
\begin{align}
  \phi_q'(\zeta) &= \hat\gamma\,\hat\lambda\,|s|^{\hat\gamma-1}\,
     \big(\psi_q^{(\hat\mu)}\big)'(u),
  \label{eq:block-d1}\\
  \phi_q''(\zeta) &= \hat\gamma^{2}\hat\lambda^{2}\,|s|^{2\hat\gamma-2}\,
     \big(\psi_q^{(\hat\mu)}\big)''(u)
   + \hat\gamma(\hat\gamma-1)\,\hat\lambda^{2}\,\sgn(s)\,|s|^{\hat\gamma-2}\,
     \big(\psi_q^{(\hat\mu)}\big)'(u).
  \label{eq:block-d2}
\end{align}
The first derivative of $\psi_q^{(\hat\mu)}$ is given in closed form by
Proposition~\ref{prop:gen-derivative}, and the second follows by applying
\eqref{eq:gen-derivative} to each of its terms together with
$\big(\psi_n^{(\hat\mu)}/u\big)'=\big(\psi_n^{(\hat\mu)}\big)'/u-\psi_n^{(\hat\mu)}/u^{2}$.
Hence, away from the origin, every derivative of the block is a finite
combination of generalized Hermite functions of neighboring orders multiplied
by powers of $|s|$ and $u^{-1}$. The factor $|s|^{\hat\gamma-1}$ diverges at
$s=0$ because $\hat\gamma<1$, and the behavior of the block at the origin is
described by the following result.

\begin{theorem}[Regularity of the block at the origin]\label{thm:regularity}
Let $\hat\mu\ge0$, $0<\hat\gamma<1$, $\hat\lambda>0$, and define the leading
exponent
\begin{equation*}
  p_q=\begin{cases}
    \hat\gamma\,\hat\mu, & q \text{ even},\ \hat\mu>0,\\
    2\hat\gamma, & q \text{ even},\ \hat\mu=0,\\
    \hat\gamma\,(\hat\mu+1), & q \text{ odd}.
  \end{cases}
\end{equation*}
Then the edge function $\phi_q$ of \eqref{eq:fhnb} satisfies the following.
\begin{enumerate}
  \item[(i)] $\phi_q\in C^{\infty}(\R\setminus\{0\})$, and $\phi_q$ is
  continuous on $\R$.
  \item[(ii)] For every integer $k\ge1$ with $k<p_q$, $\phi_q\in C^{k}(\R)$.
  \item[(iii)] If $p_q$ is not an integer, then for every integer $k>p_q$ the
  $k$-th derivative is unbounded near the origin:
  $|\phi_q^{(k)}(\zeta)|\to\infty$ as $\zeta\to0$.
\end{enumerate}
\end{theorem}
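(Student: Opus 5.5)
The plan is to write $\phi_q$ near the origin as a generalized power series in $|s|$, where $s=\hat\lambda\zeta$, and read off the regularity term by term.

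\emph{Local structure of $\psi_q^{(\hat\mu)}$.} Since $H_q^{(\hat\mu)}$ has the parity of $q$, and $e^{-u^2/2}$ is an entire even function, we can write
\begin{equation*}
  \psi_q^{(\hat\mu)}(u)=|u|^{\hat\mu}\,E(u^2)\quad(q\text{ even}),\qquad
  \psi_q^{(\hat\mu)}(u)=|u|^{\hat\mu}\,u\,E(u^2)\quad(q\text{ odd}),
\end{equation*}
with $E$ entire. Substituting $u=\sgn(s)|s|^{\hat\gamma}$ gives
\begin{equation*}
  \phi_q(\zeta)=|s|^{\hat\gamma\hat\mu}\,E\bigl(|s|^{2\hat\gamma}\bigr)
  \quad\text{or}\quad
  \phi_q(\zeta)=\sgn(s)\,|s|^{\hat\gamma(\hat\mu+1)}\,E\bigl(|s|^{2\hat\gamma}\bigr).
\end{equation*}
Expanding $E$ yields, on each side of $0$, a convergent series $\sum_j c_j|s|^{a+2\hat\gamma j}$, possibly multiplied by $\sgn(s)$. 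Here $a=\hat\gamma\hat\mu$ or $a=\hat\gamma(\hat\mu+1)$, and the series converges uniformly with its termwise derivatives on compact sets away from $0$.

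Part (i) follows at once. Off the origin, $\phi_q$ is a composition of smooth maps. At the origin, every exponent is $\ge 0$. When $a=0$ (even $q$, $\hat\mu=0$) the constant term is just $c_0$, so the series still tends to $c_0$.

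\emph{Identifying the leading exponent.} We need the first non-constant exponent with a nonzero coefficient. When $\hat\mu>0$ and $q$ is even, $E(0)=H_q^{(\hat\mu)}(0)/\sqrt{\varrho_q^{(\hat\mu)}}$, and this is nonzero. The Laguerre representation gives $H_{2k}^{(\mu)}(0)=(-1)^k2^{2k}k!\,L_k^{(\mu-1/2)}(0)\neq0$. So the leading exponent is $\hat\gamma\hat\mu=p_q$. For odd $q$, the analogous value $L_k^{(\mu+1/2)}(0)$ is also nonzero, so the leading exponent is $p_q=\hat\gamma(\hat\mu+1)$. When $\hat\mu=0$ and $q$ is even, the constant $c_0$ is harmless, and the next exponent is $2\hat\gamma$. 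Its coefficient $c_1$ is the $u^2$-coefficient of $e^{-u^2/2}H_q(u)$. I expect this step to be the main obstacle: one must check that $c_1\ne0$, or else state that the true leading exponent may be larger. I would compute $c_1$ from the Laguerre form $L_k^{(-1/2)}$ together with the factor $-\tfrac12 L_k(0)$ from the Gaussian, and show the sum is nonzero. If it can vanish, I would argue that (ii) remains true with $p_q$ as a lower bound, and that (iii) needs only the first nonvanishing exponent, which is still of the form $2\hat\gamma j$.

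\emph{Parts (ii) and (iii).} Differentiating termwise $k$ times gives terms $c_j\,(\text{falling factorial})\,\hat\lambda^k|s|^{e_j-k}$, with one-sided signs that match across $0$ by parity. If $k<p_q$, then every non-constant exponent satisfies $e_j\ge p_q>k$. Hence all derivatives up to order $k$ tend to $0$ (or to the constant's derivative) as $s\to0$ from both sides. The standard lemma — $f$ continuous at $0$, $C^1$ off $0$, with $f'$ having a limit at $0$, implies $f$ is differentiable at $0$ — then upgrades this inductively to $\phi_q\in C^k(\R)$. For (iii), take $k>p_q$ with $p_q\notin\mathbb Z$. The leading term becomes $c\,p_q(p_q-1)\cdots(p_q-k+1)|s|^{p_q-k}$. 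Its coefficient is nonzero because $p_q$ is not an integer, and $|s|^{p_q-k}\to\infty$. The remaining terms have exponents $e_j-k$ strictly larger than $p_q-k$. Each is either bounded or of lower order in blow-up, with nonzero falling factorials or vanishing coefficients. So they are $o(|s|^{p_q-k})$, and therefore $|\phi_q^{(k)}|\to\infty$.
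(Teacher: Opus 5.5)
Your proposal follows the paper's proof essentially step for step: the parity/Laguerre factorization $\psi_q^{(\hat\mu)}(u)=|u|^{\hat\mu}E(u^2)$ or $|u|^{\hat\mu}u\,E(u^2)$, the resulting generalized power series in $|s|$ with nonzero leading coefficient (from $L_k^{(\alpha)}(0)=\binom{k+\alpha}{k}>0$), and termwise differentiation for (ii) and (iii). The step you flag for $\hat\mu=0$, $q=2k$ does close, so your fallback is unnecessary; it would not have proved (iii) with the stated $p_q$ anyway. The paper reads off $c_1=\tfrac12\psi_q''(0)=-\tfrac12(2q+1)\psi_q(0)\neq0$ from the Hermite equation $\psi_q''=(u^2-2q-1)\psi_q$. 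Your Laguerre route gives the same result, because the $t$-coefficient of $e^{-t/2}L_k^{(-1/2)}(t)$ is $-\binom{k-1/2}{k-1}-\tfrac12\binom{k-1/2}{k}$, which is strictly negative.
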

\begin{proof}
Away from the origin $\vartheta_{\hat\gamma}$ and $|x|^{\hat\mu}$ are smooth,
which gives (i) together with the continuity of the composition. By the
Laguerre representation of Section~\ref{sec:hermite}, an even-order function
can be written as $\psi_q^{(\hat\mu)}(u)=|u|^{\hat\mu}E_q(u^{2})$ and an
odd-order function as $\psi_q^{(\hat\mu)}(u)=|u|^{\hat\mu}u\,O_q(u^{2})$, with
entire functions $E_q,O_q$ satisfying $E_q(0)\neq0$ and $O_q(0)\neq0$, because
$L_k^{(\alpha)}(0)=\binom{k+\alpha}{k}>0$ for $\alpha>-1$. Since
$|\vartheta_{\hat\gamma}(s)|=|s|^{\hat\gamma}$ and
$\sgn\vartheta_{\hat\gamma}(s)=\sgn s$,
\begin{equation*}
  \phi_q(\zeta)=
  \begin{cases}
    |s|^{\hat\gamma\hat\mu}\,E_q\big(|s|^{2\hat\gamma}\big), & q \text{ even},\\
    \sgn(s)\,|s|^{\hat\gamma(\hat\mu+1)}\,O_q\big(|s|^{2\hat\gamma}\big), & q \text{ odd}.
  \end{cases}
\end{equation*}
Expanding $E_q$ and $O_q$ in power series represents $\phi_q$ near the origin
as a convergent series of terms $c_j\,\sgn(s)^{q}|s|^{e_j}$ with increasing
exponents $e_j$. If $q$ is odd or $\hat\mu>0$, the leading exponent is
$e_0=p_q$ and its coefficient is $c_0\neq0$. If $q$ is even and $\hat\mu=0$,
the series starts with the constant $E_q(0)$, the next exponent is
$p_q=2\hat\gamma$, and its coefficient equals
$\tfrac12\big(\psi_q^{(0)}\big)''(0)=-\tfrac12(2q+1)\psi_q^{(0)}(0)\neq0$
by the Hermite differential equation. In both cases every non-constant term
has an exponent $e_j\ge p_q$. Every term with $e_j>k$ is $k$ times
continuously differentiable with vanishing derivatives at the origin, which
proves (ii). If $p_q$ is not an integer, the $k$-th derivative of the leading
term equals a nonzero multiple of $|s|^{p_q-k}$, which dominates the remaining
terms as $s\to0$ and diverges for $k>p_q$; this proves (iii).
\end{proof}

Theorem~\ref{thm:regularity} makes the regularity requirements of the
physics-informed residuals explicit. Because $\hat\gamma<1$, a first
derivative that is continuous at the origin requires
$\hat\mu>1/\hat\gamma-1$ for odd orders and $\hat\mu>1/\hat\gamma$ for even
orders, and a continuous second derivative, as required by the Lane--Emden
and Burgers residuals, requires $\hat\mu>2/\hat\gamma-1$ and
$\hat\mu>2/\hat\gamma$, respectively.

\paragraph{Singular set of the network}
In an FHKAN the block acts on the pre-activations
$z_i(\boldsymbol\zeta)$ of the preceding linear layer. The network output is
therefore infinitely differentiable outside the singular set
$\mathcal{Z}=\{\boldsymbol\zeta:\ z_i(\boldsymbol\zeta)=0 \text{ for some block input } i\}$,
and its derivatives there are given exactly by \eqref{eq:block-d1},
\eqref{eq:block-d2} and the chain rule. In the first layer each
$z_i$ is affine, so its zero set is a hyperplane. In deeper layers each $z_i$
is real-analytic on every connected component of the complement of the
singular sets of the preceding layers, because $|x|^{\hat\gamma}$ and
$|x|^{\hat\mu}$ are real-analytic for $x\neq0$. Consequently $\mathcal{Z}$ is
a closed set of Lebesgue measure zero whenever no pre-activation vanishes
identically on an open set.

\paragraph{Treatment of singular points during training}
The physics-informed loss evaluates the residual only at the collocation
nodes. At every node outside $\mathcal{Z}$ the derivatives entering the
residual are finite and are the exact values \eqref{eq:block-d1} and
\eqref{eq:block-d2}, which automatic differentiation reproduces. For a
fixed node, each condition $z_i=0$ defines a set of parameter values of
measure zero, so a node lies exactly on $\mathcal{Z}$ only for exceptional
parameter values. To keep
floating-point evaluation finite at an exactly vanishing pre-activation, the
fractional map is evaluated as $\sgn(x)\max(|x|,\varepsilon)^{\hat\gamma}$, with
$\varepsilon$ equal to the machine epsilon of the floating-point format. This
modification alters $\vartheta_{\hat\gamma}$ only on $(-\varepsilon,\varepsilon)$
and leaves the computed value and derivatives unchanged at every node whose
pre-activations exceed $\varepsilon$ in magnitude. A node at distance $\delta$
from a point of $\mathcal{Z}$ contributes terms of size $O(\delta^{p_q-2})$ to
a second-order residual, so the residual loss penalizes parameter
configurations in which a zero crossing approaches a collocation node. The
residual is thereby enforced pointwise at the nodes.

\subsection{Training considerations}
All parameters---the linear weights $\theta$, biases, and the per-order triples
$(\mu_q,\gamma_q,\lambda_q)$---are optimized jointly by gradient descent. The
forward pass evaluates the recurrence \eqref{eq:hermite-recurrence} up to order
$Q$ and applies the envelope \eqref{eq:hermite-functions}; both steps are
vectorized and differentiable, so gradients flow through the reparametrizations
in \eqref{eq:fhnb} automatically. For physics-informed objectives, the spatial
derivatives required by the differential operator are obtained, at every
collocation node outside the singular set $\mathcal{Z}$, either by
automatic differentiation or, equivalently, in closed form through \eqref{eq:block-d1}
and \eqref{eq:block-d2}, as described in Section~\ref{sec:block-regularity}. Because the basis is bounded, the forward pass uses the raw pre-activations,
and the training and inference behavior stay identical.

This residual viewpoint is the same one that underlies Hermite function
collocation \cite{parand2010hermite}. There, a trial solution is expanded in
the mapped functions $\widehat\psi_n$, the differential residual is formed by
substituting the expansion into the governing operator, and the unknown
coefficients are fixed by requiring the residual to vanish at the mapped
Hermite--Gauss nodes---a nonlinear algebraic system, typically solved by
Newton iteration. Essential data at $\zeta=0$ are built into the trial
function by a lifting of the form $A+B\zeta+\zeta\sum a_n\widehat\psi_n(\zeta)$.
FHKAN keeps the residual, the mapped basis, and the Gauss nodes. The
coefficients, together with the shape, fractional order, and scale of each
block, are obtained by minimizing
$\mathfrak{R}^{\top}\mathfrak{R}+\mathfrak{B}^2+\mathfrak{I}^2$ by gradient
descent. Initial and boundary observations enter through the quadratic
penalties $\mathfrak{B}$ and $\mathfrak{I}$. The physics-informed FHKAN is a
trainable, fractional, multi-block form of this Hermite collocation scheme.
The convergence theory of classical Hermite collocation is formulated
for a fixed orthogonal basis (Section~\ref{sec:hermite-spectral}); for the
trainable, non-orthogonal FHKAN expansion the accuracy of the collocation
solutions is measured numerically in Section~\ref{sec:experiments}.

\section{Experiments}
\label{sec:experiments}
We evaluate FHKAN on data-driven and physics-informed tasks. The data-driven
suite comprises a synthetic regression problem, MNIST digit classification,
Fashion-MNIST denoising, and IMDB sentiment analysis; the physics-informed suite
comprises the Lane--Emden ordinary differential equation, the one-dimensional
Burgers partial differential equation, and a fractional delay differential
equation. The deep-learning models and the physics-informed models were implemented in in PyTorch
\cite{paszke2019pytorch}. Experiments were run on a workstation with an Intel
Core-i7 CPU, 32\,GB of RAM, and an NVIDIA RTX-class GPU. Unless stated otherwise,
training used the Adam optimizer \cite{kingma2014adam}; the physics-informed
models used short Adam warmup followed by L-BFGS. \cite{liu1989limited}. Every reported value is the mean over
ten independent runs. Where a table entry contains a $\pm$ term, the second
number is the standard deviation across those runs.

\subsection{Data-driven tasks}

\subsubsection{Synthetic regression}
\label{sec:regression}
We first approximate the noisy one-dimensional target
\begin{equation*}
  \chi(\zeta) = \sin(\pi\zeta) + 10\,\exp\!\Big(\tfrac{\zeta}{5}\Big) + \tfrac{\epsilon}{100},
  \qquad \epsilon\sim\mathcal{N}(0,1),
\end{equation*}
sampled at $50$ equidistant points on $[-2,1]$ and split into training and test
subsets with a $33\%$ hold-out. A single hidden-layer network with a varying
number of neurons is trained with mean-squared-error loss, Adam at learning rate
$10^{-2}$, at most $500$ iterations, and early stopping with patience $200$.
The additive Gaussian noise $\epsilon$ was generated once, with a fixed
random seed, and the resulting noisy dataset was kept identical across the ten
independent runs. The run-to-run variability reported below comes from the
random initialization of the networks and the stochasticity of the optimizer.
Figure~\ref{fig:reg-pred} compares the fitted curves: FHKAN tracks the ground
truth most faithfully, where ReLU and tanh produce visible bias. Figure~\ref{fig:reg-layers} repeats the
study for multi-layer networks with five neurons per layer and reports the test
mean absolute error as the depth grows; the corresponding statistics are
collected in Table~\ref{tbl:multilayer}. FHKAN attains the lowest error at every
depth, with FHKAN(6) best at one and nine hidden layers and FHKAN(5) best at
three, five, and seven hidden layers, and its standard deviations across the
ten runs lie between $0.007$ and $0.026$.
Table~\ref{tbl:reg-metrics} reports MAE, MSE, RMSE, and the training loss
(the mean squared residual on the training points) for the single-hidden-layer
configuration.

Figure~\ref{fig:params} tracks the trainable parameters of a five-neuron
single-layer FHKAN trained on $250$ samples without early stopping. The shape
$\hat\mu$, the fractional order $\hat\gamma$, and the scale $\hat\lambda$ all
settle to stable values within roughly $250$ iterations, confirming that the
reparametrizations in \eqref{eq:fhnb} lead to a well-behaved optimization
landscape. Finally, Figure~\ref{fig:time} compares the mean execution time of
each activation on a $1000\times1000$ input: the fractional Hermite block is
about $7.9$ times slower than GELU on the CPU but markedly faster than the
spline-based KAN, and the gap narrows substantially on the GPU.

\begin{figure}[htbp]
  \centering
  \begin{subfigure}[b]{0.49\textwidth}
    \centering
    \includegraphics[width=\textwidth, height=5cm]{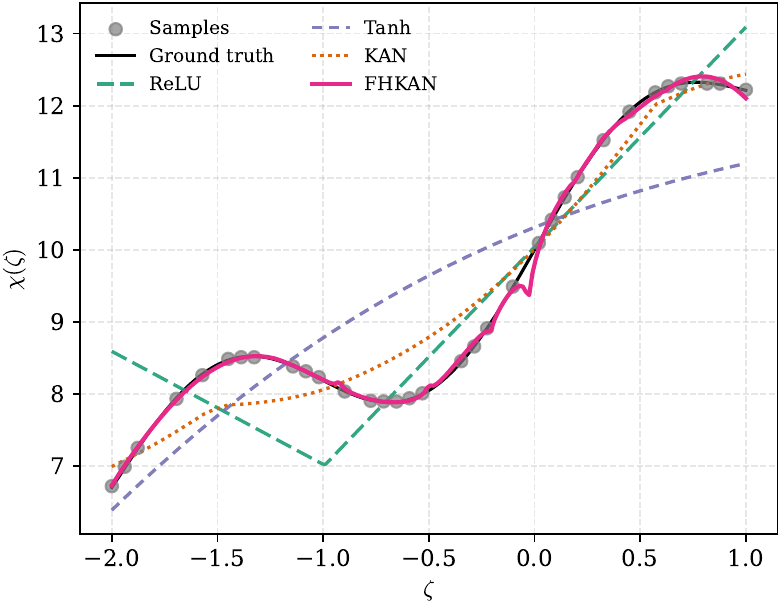}
    \caption{Single-layer fit on the regression target.}
    \label{fig:reg-pred}
  \end{subfigure}\hfill
  \begin{subfigure}[b]{0.49\textwidth}
    \centering
    \includegraphics[width=\textwidth, height=5cm]{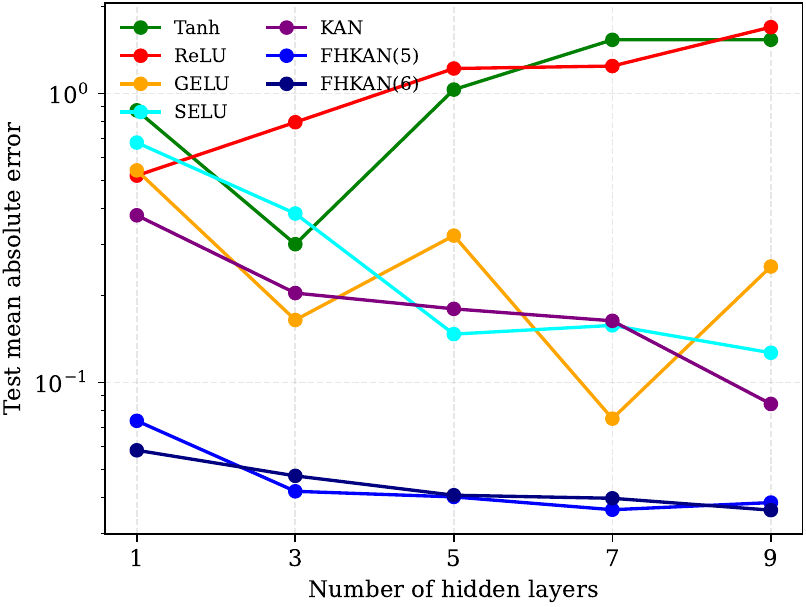}
    \caption{Test error versus network depth.}
    \label{fig:reg-layers}
  \end{subfigure}\\
  \begin{subfigure}[b]{0.49\textwidth}
    \centering
    \includegraphics[width=\textwidth, height=5cm]{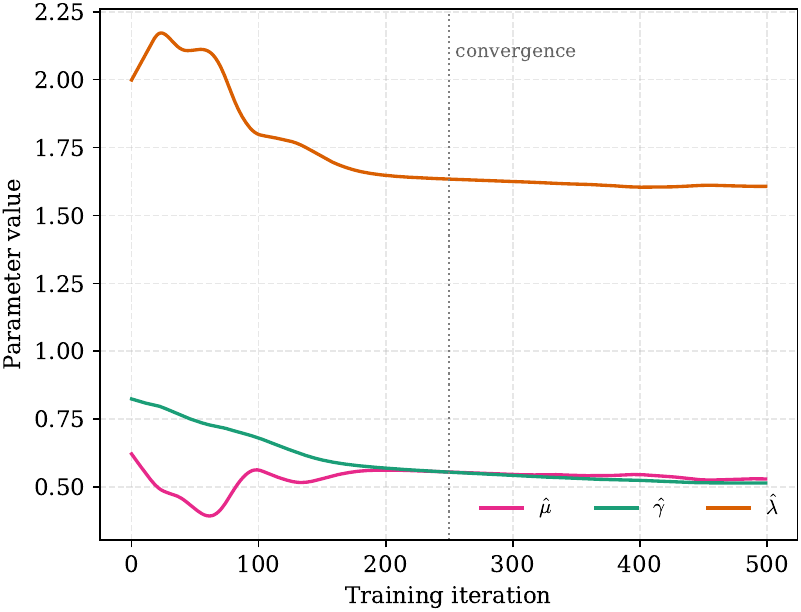}
    \caption{Evolution of $\hat\mu$, $\hat\gamma$, and $\hat\lambda$.}
    \label{fig:params}
  \end{subfigure}\hfill
  \begin{subfigure}[b]{0.49\textwidth}
    \centering
    \includegraphics[width=\textwidth, height=5cm]{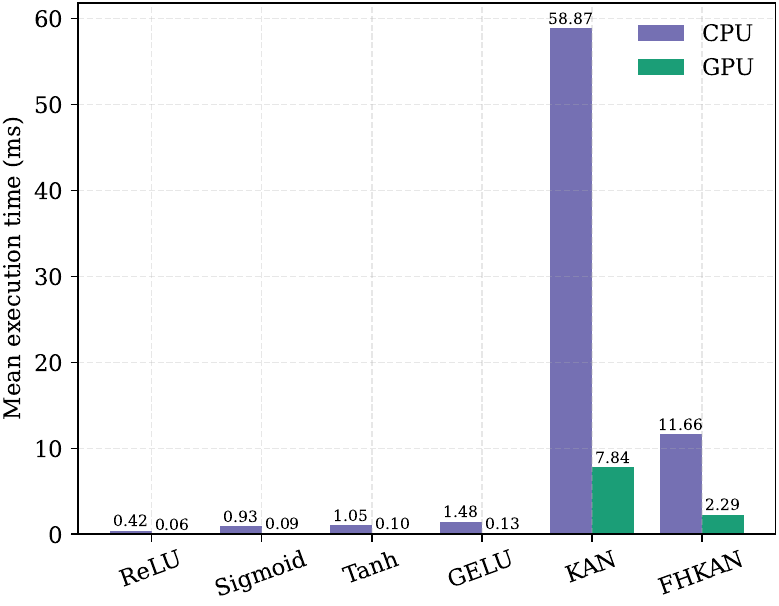}
    \caption{Mean execution time on CPU and GPU.}
    \label{fig:time}
  \end{subfigure}
  \caption{Synthetic regression study. FHKAN is more accurate than standard
  activations and the vanilla KAN, its trainable parameters converge quickly,
  and its run time stays well below that of the spline-based KAN.}
  \label{fig:regression}
\end{figure}

\begin{table}[t]
\centering
\caption{Test MAE across increasing depth for the synthetic regression
experiment.}
\vspace{0.5em}
\small
\setlength{\tabcolsep}{3pt}
\begin{tabularx}{\textwidth}{@{}l*{5}{Y}@{}}
\toprule
\multirow{2}{*}{Act.\ func.} & \multicolumn{5}{c}{Number of hidden layers}\\
\cmidrule(l){2-6}
 & 1 & 3 & 5 & 7 & 9 \\
\midrule
Softplus    & 0.461{\tiny$\pm$0.046} & 0.448{\tiny$\pm$0.031} & 0.448{\tiny$\pm$0.017} & 0.418{\tiny$\pm$0.078} & 0.358{\tiny$\pm$0.107} \\
SiLU        & 0.518{\tiny$\pm$0.125} & 0.388{\tiny$\pm$0.156} & 0.363{\tiny$\pm$0.344} & 0.162{\tiny$\pm$0.128} & 0.272{\tiny$\pm$0.380} \\
SELU        & 0.677{\tiny$\pm$0.087} & 0.384{\tiny$\pm$0.151} & 0.147{\tiny$\pm$0.058} & 0.158{\tiny$\pm$0.075} & 0.127{\tiny$\pm$0.069} \\
Leaky ReLU  & 0.489{\tiny$\pm$0.086} & 0.316{\tiny$\pm$0.140} & 0.214{\tiny$\pm$0.103} & 0.235{\tiny$\pm$0.067} & 0.189{\tiny$\pm$0.093} \\
GELU        & 0.542{\tiny$\pm$0.134} & 0.165{\tiny$\pm$0.066} & 0.322{\tiny$\pm$0.616} & 0.075{\tiny$\pm$0.041} & 0.252{\tiny$\pm$0.618} \\
ELU         & 0.621{\tiny$\pm$0.127} & 0.433{\tiny$\pm$0.171} & 0.384{\tiny$\pm$0.341} & 0.254{\tiny$\pm$0.058} & 0.272{\tiny$\pm$0.221} \\
Tanh        & 0.875{\tiny$\pm$0.012} & 0.301{\tiny$\pm$0.041} & 1.032{\tiny$\pm$0.406} & 1.535{\tiny$\pm$0.001} & 1.535{\tiny$\pm$0.000} \\
Sigmoid     & 0.745{\tiny$\pm$0.082} & 1.332{\tiny$\pm$0.276} & 1.521{\tiny$\pm$0.032} & 1.528{\tiny$\pm$0.018} & 1.531{\tiny$\pm$0.006} \\
ReLU        & 0.520{\tiny$\pm$0.101} & 0.796{\tiny$\pm$1.425} & 1.221{\tiny$\pm$1.914} & 1.245{\tiny$\pm$1.903} & 1.698{\tiny$\pm$2.194} \\
KAN         & 0.379{\tiny$\pm$0.051} & 0.204{\tiny$\pm$0.031} & 0.180{\tiny$\pm$0.066} & 0.163{\tiny$\pm$0.033} & 0.084{\tiny$\pm$0.041} \\
FHKAN(5)    & 0.074{\tiny$\pm$0.026} & \textbf{0.042{\tiny$\pm$0.011}} & \textbf{0.040{\tiny$\pm$0.015}} & \textbf{0.036{\tiny$\pm$0.008}} & 0.038{\tiny$\pm$0.007} \\
FHKAN(6)    & \textbf{0.058{\tiny$\pm$0.017}} & 0.048{\tiny$\pm$0.015} & 0.041{\tiny$\pm$0.011} & 0.040{\tiny$\pm$0.012} & \textbf{0.036{\tiny$\pm$0.010}} \\
\bottomrule
\end{tabularx}
\label{tbl:multilayer}
\end{table}

\begin{table}[t]
\centering
\caption{Single-layer regression metrics for the synthetic regression
experiment.}
\vspace{0.5em}
\begin{tabularx}{0.92\textwidth}{@{}l*{4}{Y}@{}}
\toprule
Act.\ func. & MAE & MSE & RMSE & Loss \\
\midrule
ReLU     & $0.520\pm0.101$ & $0.3425\pm0.1291$ & $0.575\pm0.106$ & $0.4035\pm0.1113$ \\
Tanh     & $0.875\pm0.012$ & $0.9343\pm0.0396$ & $0.966\pm0.021$ & $0.9820\pm0.0471$ \\
KAN      & $0.379\pm0.051$ & $0.0437\pm0.0492$ & $0.433\pm0.049$ & $0.1731\pm0.0387$ \\
FHKAN(5) & $0.074\pm0.026$ & $0.0127\pm0.0078$ & $0.107\pm0.036$ & $0.0038\pm0.0024$ \\
FHKAN(6) & $\mathbf{0.058\pm0.017}$ & $\mathbf{0.0086\pm0.0059}$ & $\mathbf{0.087\pm0.032}$ & $\mathbf{0.0025\pm0.0016}$ \\
\bottomrule
\end{tabularx}
\label{tbl:reg-metrics}
\end{table}

\subsubsection{Image classification}
We next test FHKAN as the activation of a convolutional classifier on the MNIST
handwritten-digit dataset \cite{lecun1998mnist}, which contains $60{,}000$
training and $10{,}000$ test grayscale images of size $28\times28$. We hold out
$6{,}000$ training images for validation, scale the pixels to $[0,1]$, and feed
them to the convolutional network of Figure~\ref{fig:arch-mnist} with a batch
size of $512$. The activation is set to one of sigmoid, hyperbolic tangent, ReLU,
or fHNB of order $q\in\{2,\dots,6\}$, and the network is trained for $30$ epochs
with Adam. Table~\ref{tbl:mnist} and Figure~\ref{fig:mnist-curves} report the
results: the fractional Hermite block delivers the lowest loss and the highest
accuracy, with the order-$4$ block being the strongest configuration.

\begin{figure}[htbp]
  \centering
  \resizebox{\textwidth}{!}{
  \begin{tikzpicture}[
    >=Latex, node distance=4mm,
    layer/.style={draw, rounded corners, minimum height=15mm, align=center,
      font=\scriptsize},
    conv/.style={layer, fill=blue!8, minimum width=11mm},
    pool/.style={layer, fill=red!8, minimum width=8mm},
    act/.style={layer, fill=orange!14, minimum width=9mm},
    dense/.style={layer, fill=green!8, minimum width=9mm}]
    \node[layer, fill=gray!8, minimum width=10mm] (in) {Input\\$28\times28$};
    \node[conv, right=of in] (c1) {Conv\\$3\times3$};
    \node[act, right=of c1] (a1) {Act.};
    \node[pool, right=of a1] (p1) {Max\\pool};
    \node[conv, right=of p1] (c2) {Conv\\$3\times3$};
    \node[act, right=of c2] (a2) {Act.};
    \node[pool, right=of a2] (p2) {Max\\pool};
    \node[dense, right=of p2] (fl) {Flatten};
    \node[dense, right=of fl] (d1) {Dense};
    \node[dense, right=of d1] (out) {Softmax\\$10$};
    \foreach \a/\b in {in/c1,c1/a1,a1/p1,p1/c2,c2/a2,a2/p2,p2/fl,fl/d1,d1/out}
      \draw[->] (\a) -- (\b);
  \end{tikzpicture}
  } 
  \caption{Convolutional architecture used for MNIST classification. The
  activation blocks are replaced by sigmoid, tanh, ReLU, or an fHNB.}
  \label{fig:arch-mnist}
\end{figure}

\begin{table}[htbp]
\centering
\caption{Loss and accuracy of a convolutional classifier on MNIST for different
activation functions.}
\vspace{0.5em}
\setlength{\tabcolsep}{12pt}
\begin{tabular}{@{}lcc@{}}
\toprule
Act.\ func. & Loss & Accuracy (\%) \\
\midrule
Sigmoid  & $0.0611\pm0.0031$ & $98.092\pm0.0814$ \\
Tanh     & $0.0322\pm0.0018$ & $98.904\pm0.0572$ \\
ReLU     & $0.0256\pm0.0013$ & $99.140\pm0.0386$ \\
fHNB(2)  & $0.0249\pm0.0016$ & $99.156\pm0.0471$ \\
fHNB(3)  & $0.0227\pm0.0018$ & $99.211\pm0.0762$ \\
fHNB(4)  & $\mathbf{0.0209}\pm0.0012$ & $\mathbf{99.272}\pm0.0508$ \\
fHNB(5)  & $0.0231\pm0.0011$ & $99.218\pm0.0455$ \\
fHNB(6)  & $0.0274\pm0.0026$ & $99.061\pm0.1106$ \\
\bottomrule
\end{tabular}
\label{tbl:mnist}
\end{table}

\begin{figure}[htbp]
  \begin{subfigure}{0.49\textwidth}
    \centering
    \includegraphics[width=\linewidth]{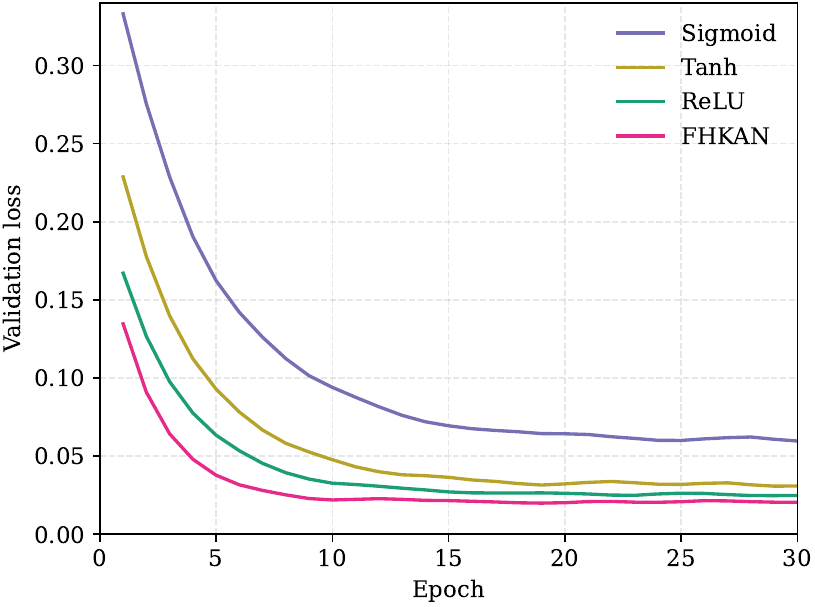}
    \caption{Validation loss.}
  \end{subfigure}\hfill
  \begin{subfigure}{0.49\textwidth}
    \centering
    \includegraphics[width=\linewidth]{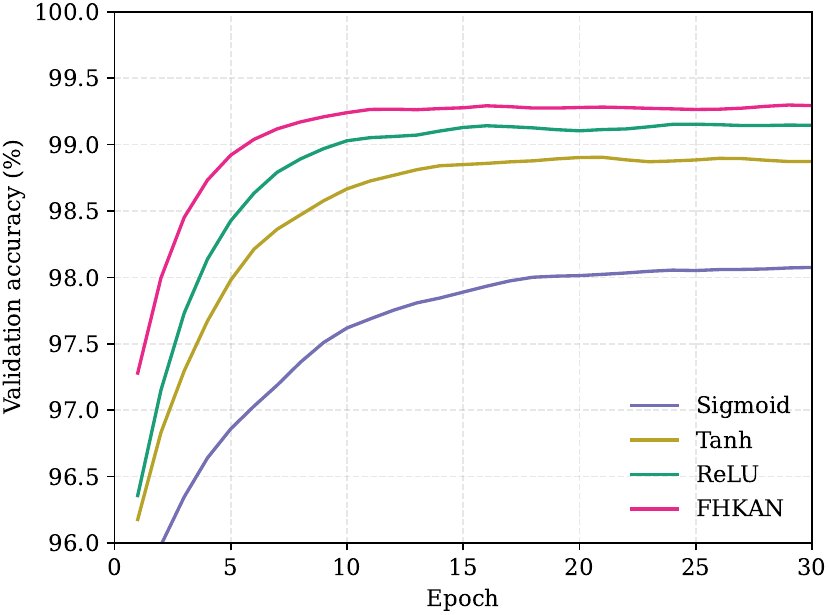}
    \caption{Validation accuracy.}
  \end{subfigure}
  \caption{Training curves on MNIST. The FHKAN curve reaches a lower
  validation loss and a higher validation accuracy than the sigmoid, tanh, and
  ReLU networks from the first epoch onward.}
  \label{fig:mnist-curves}
\end{figure}

\subsubsection{Image denoising}
For denoising we build a convolutional autoencoder, shown in
Figure~\ref{fig:arch-ae}, and benchmark it on Fashion-MNIST \cite{xiao2017fashion},
a harder drop-in replacement for MNIST with $60{,}000$ grayscale images in ten
clothing categories. The autoencoder is first pre-trained to reconstruct clean
images, which lets it learn the data manifold, and is then trained on pairs of
noisy and clean images, where the noisy inputs are obtained by adding Gaussian
noise of factor $0.3$. Both phases use a batch size of $512$, $20$ epochs, and
Adam. Reconstruction quality is measured by MSE, peak
signal-to-noise ratio (PSNR), and structural similarity (SSIM):
\begin{align*}
  \text{PSNR}(\boldsymbol\zeta,\hat{\boldsymbol\zeta})
  &= 10\log_{10}\!\left(\frac{\text{MAX}^2\{\boldsymbol\zeta\}\cdot MN}{\|\boldsymbol\zeta-\hat{\boldsymbol\zeta}\|_F^2}\right),\\[2pt]
  \text{SSIM}(\boldsymbol\zeta,\hat{\boldsymbol\zeta})
  &= \frac{(2\mu_\zeta\mu_{\hat\zeta}+C_1)(2\sigma_{\zeta\hat\zeta}+C_2)}
         {(\mu_\zeta^2+\mu_{\hat\zeta}^2+C_1)(\sigma_\zeta^2+\sigma_{\hat\zeta}^2+C_2)},
\end{align*}
with $\|\cdot\|_F$ the Frobenius norm, $\mu$ and $\sigma$ the relevant means,
variances and covariance, and $C_1,C_2$ small stabilizing constants.
Table~\ref{tbl:ae} shows that the fHNB-based autoencoder yields the highest PSNR
and SSIM in both the pre-training and the denoising phases.

\begin{figure}[htbp]
  \centering
  \resizebox{\textwidth}{!}{
  \begin{tikzpicture}[
    >=Latex, node distance=3.5mm,
    layer/.style={draw, rounded corners, align=center, font=\scriptsize},
    enc/.style={layer, fill=blue!8, minimum height=16mm, minimum width=9mm},
    bott/.style={layer, fill=orange!16, minimum height=9mm, minimum width=9mm},
    dec/.style={layer, fill=green!8, minimum height=16mm, minimum width=9mm}]
    \node[layer, fill=gray!8, minimum height=18mm, minimum width=9mm] (in) {Noisy\\$28\times28$};
    \node[enc, right=of in] (e1) {Conv\\+Act};
    \node[enc, right=of e1] (e2) {Conv\\+Act};
    \node[bott, right=of e2] (b) {Latent};
    \node[dec, right=of b] (d1) {Deconv\\+Act};
    \node[dec, right=of d1] (d2) {Deconv\\+Act};
    \node[layer, fill=gray!8, minimum height=18mm, minimum width=9mm, right=of d2] (out) {Clean\\$28\times28$};
    \foreach \a/\b in {in/e1,e1/e2,e2/b,b/d1,d1/d2,d2/out} \draw[->] (\a) -- (\b);
  \end{tikzpicture}
  } 
  \caption{Convolutional autoencoder used for Fashion-MNIST denoising. The
  activation is selected among sigmoid, tanh, ReLU, or an fHNB.}
  \label{fig:arch-ae}
\end{figure}

\begin{table}[htbp]
\centering
\caption{Fashion-MNIST denoising quality. Higher PSNR and SSIM and lower MSE are
better; the best mean in each column is shown in bold.}
\vspace{0.5em}
\small
\setlength{\tabcolsep}{3pt}
\begin{tabularx}{\textwidth}{@{}l*{6}{Y}@{}}
\toprule
\multirow{2}{*}{Act.\ func.} & \multicolumn{3}{c}{Pre-train} & \multicolumn{3}{c}{Denoising}\\
\cmidrule(lr){2-4}\cmidrule(lr){5-7}
 & MSE & PSNR & SSIM & MSE & PSNR & SSIM \\
\midrule
Sigmoid & $0.018${\scriptsize$\pm$0.0024} & $18.03${\scriptsize$\pm$0.64} & $0.579${\scriptsize$\pm$0.038} & $0.018${\scriptsize$\pm$0.0016} & $17.81${\scriptsize$\pm$0.29} & $0.564${\scriptsize$\pm$0.016} \\
Tanh    & $0.017${\scriptsize$\pm$0.0017} & $18.08${\scriptsize$\pm$0.48} & $0.583${\scriptsize$\pm$0.029} & $0.018${\scriptsize$\pm$0.0013} & $17.83${\scriptsize$\pm$0.32} & $0.562${\scriptsize$\pm$0.023} \\
ReLU    & $0.017${\scriptsize$\pm$0.0014} & $18.12${\scriptsize$\pm$0.26} & $0.577${\scriptsize$\pm$0.021} & $0.019${\scriptsize$\pm$0.0018} & $17.54${\scriptsize$\pm$0.21} & $0.537${\scriptsize$\pm$0.011} \\
fHNB(5) & $0.013${\scriptsize$\pm$0.0011} & $19.24${\scriptsize$\pm$0.27} & $0.649${\scriptsize$\pm$0.012} & $0.017${\scriptsize$\pm$0.0008} & $18.17${\scriptsize$\pm$0.25} & $0.586${\scriptsize$\pm$0.012} \\
fHNB(6) & $\mathbf{0.012}${\scriptsize$\pm$0.0009} & $\mathbf{19.31}${\scriptsize$\pm$0.28} & $\mathbf{0.653}${\scriptsize$\pm$0.016} & $\mathbf{0.016}${\scriptsize$\pm$0.0012} & $\mathbf{18.24}${\scriptsize$\pm$0.22} & $\mathbf{0.589}${\scriptsize$\pm$0.010} \\
\bottomrule
\end{tabularx}
\label{tbl:ae}
\end{table}

\subsubsection{Sentiment analysis}
The final data-driven task is binary sentiment classification on the IMDB review
corpus \cite{maas2011learning}, with $20{,}000$ training, $5{,}000$ validation,
and $25{,}000$ test reviews. Reviews are cleaned of HTML and punctuation,
tokenized to a vocabulary of $20{,}000$ with a sequence length of $500$, and
classified by the one-dimensional convolutional network of
Figure~\ref{fig:arch-sent}. Training uses Adam, a batch size of $512$, and $10$
epochs. Table~\ref{tbl:sentiment} reports accuracy, precision, recall,
$F_1$-score, and
ROC-AUC. The sigmoid attains the lowest loss. The order-$2$ fHNB attains the
highest recall. The order-$4$ fHNB attains the highest accuracy, precision,
$F_1$-score, and ROC-AUC.
The $F_1$-score is the harmonic mean of precision and recall,
$F_1=2\,\mathrm{Precision}\cdot\mathrm{Recall}/(\mathrm{Precision}+\mathrm{Recall})$,
and summarizes the balance between the two in a single figure.

\begin{figure}[htbp]
  \centering
  \resizebox{\textwidth}{!}{
  \begin{tikzpicture}[
    >=Latex, node distance=4mm,
    layer/.style={draw, rounded corners, minimum height=13mm, align=center,
      font=\scriptsize}]
    \node[layer, fill=gray!8, minimum width=10mm] (in) {Tokens\\$500$};
    \node[layer, fill=blue!8, minimum width=11mm, right=of in] (emb) {Embed};
    \node[layer, fill=red!7, minimum width=9mm, right=of emb] (dr) {Dropout};
    \node[layer, fill=blue!10, minimum width=10mm, right=of dr] (c) {1D Conv};
    \node[layer, fill=orange!14, minimum width=8mm, right=of c] (a) {Act.};
    \node[layer, fill=red!8, minimum width=10mm, right=of a] (gp) {Global\\maxpool};
    \node[layer, fill=green!8, minimum width=9mm, right=of gp] (d) {Dense};
    \node[layer, fill=green!10, minimum width=9mm, right=of d] (out) {Sigmoid};
    \foreach \x/\y in {in/emb,emb/dr,dr/c,c/a,a/gp,gp/d,d/out} \draw[->] (\x) -- (\y);
  \end{tikzpicture}
  } 
  \caption{One-dimensional convolutional network used for IMDB sentiment
  analysis. The activation is sigmoid, tanh, ReLU, or an fHNB.}
  \label{fig:arch-sent}
\end{figure}

\begin{table}[htbp]
\centering
\caption{IMDB sentiment-classification metrics. The best mean in each column is
shown in bold.}
\vspace{0.5em}
\scriptsize
\setlength{\tabcolsep}{2.5pt}
\begin{tabularx}{\textwidth}{@{}l*{6}{Y}@{}}
\toprule
Act.\ func. & Loss & Accuracy & Precision & Recall & $F_1$-score & ROC-AUC \\
\midrule
Sigmoid & $\mathbf{0.364}${\tiny$\pm$0.028} & $84.884${\tiny$\pm$0.412} & $84.473${\tiny$\pm$1.274} & $85.480${\tiny$\pm$1.186} & $84.974${\tiny$\pm$0.871} & $92.303${\tiny$\pm$0.361} \\
Tanh    & $0.442${\tiny$\pm$0.047} & $84.672${\tiny$\pm$0.693} & $84.834${\tiny$\pm$2.318} & $84.440${\tiny$\pm$2.147} & $84.636${\tiny$\pm$1.579} & $92.264${\tiny$\pm$0.448} \\
ReLU    & $0.416${\tiny$\pm$0.051} & $86.320${\tiny$\pm$0.974} & $86.554${\tiny$\pm$1.836} & $86.000${\tiny$\pm$1.092} & $86.276${\tiny$\pm$1.065} & $93.366${\tiny$\pm$0.337} \\
fHNB(1) & $0.407${\tiny$\pm$0.058} & $86.988${\tiny$\pm$0.982} & $87.140${\tiny$\pm$2.541} & $86.784${\tiny$\pm$2.463} & $86.961${\tiny$\pm$1.769} & $94.475${\tiny$\pm$0.429} \\
fHNB(2) & $0.379${\tiny$\pm$0.033} & $87.416${\tiny$\pm$0.524} & $87.154${\tiny$\pm$1.128} & $\mathbf{87.768}${\tiny$\pm$0.947} & $87.460${\tiny$\pm$0.737} & $94.681${\tiny$\pm$0.392} \\
fHNB(3) & $0.418${\tiny$\pm$0.041} & $87.156${\tiny$\pm$0.638} & $87.800${\tiny$\pm$1.492} & $86.304${\tiny$\pm$1.173} & $87.046${\tiny$\pm$0.945} & $94.495${\tiny$\pm$0.351} \\
fHNB(4) & $0.420${\tiny$\pm$0.049} & $\mathbf{87.960}${\tiny$\pm$0.671} & $\mathbf{88.590}${\tiny$\pm$2.864} & $87.144${\tiny$\pm$3.142} & $\mathbf{87.861}${\tiny$\pm$2.129} & $\mathbf{94.828}${\tiny$\pm$0.368} \\
fHNB(5) & $0.443${\tiny$\pm$0.081} & $86.104${\tiny$\pm$1.253} & $88.020${\tiny$\pm$2.776} & $83.584${\tiny$\pm$2.091} & $85.745${\tiny$\pm$1.716} & $93.883${\tiny$\pm$0.702} \\
fHNB(6) & $0.454${\tiny$\pm$0.086} & $86.512${\tiny$\pm$1.387} & $87.379${\tiny$\pm$3.264} & $85.352${\tiny$\pm$2.548} & $86.354${\tiny$\pm$2.059} & $94.019${\tiny$\pm$0.814} \\
\bottomrule
\end{tabularx}
\label{tbl:sentiment}
\end{table}

\subsection{Physics-informed tasks}
For physical problems we adopt a physics-informed loss. Given an operator
equation $\mathcal{N}(\chi)=\mathcal{S}$, we define the residual
$\mathfrak{R}(\zeta)=\mathcal{N}(\hat\chi)(\zeta)-\mathcal{S}(\zeta)$ and minimize
\begin{equation*}
  \text{Loss}(\boldsymbol\zeta) = \mathfrak{R}(\boldsymbol\zeta)^{\!\top}\mathfrak{R}(\boldsymbol\zeta)
  + \mathfrak{B}^2 + \mathfrak{I}^2,
\end{equation*}
where $\mathfrak{B}$ and $\mathfrak{I}$ penalize violations of the boundary and
initial conditions and $\boldsymbol\zeta$ collects $N$ collocation points
\cite{raissi2019physics}. Integer-order derivatives are obtained by automatic
differentiation; for the Caputo fractional derivative we use the L1 operational
matrix of \cite{taheri2024accelerating}. All physics-informed models use the
parallel FHKAN of Figure~\ref{fig:fhkan-par}.
The trainable exponent $\hat\gamma$ of the fHNB belongs to the
approximation space. It shapes the spectral content of the basis, and the
sigmoid reparametrization can approach the classical value $\hat\gamma\to 1$.
Fractional Jacobi and Chebyshev functions have likewise been used for
integer-order problems
\cite{parand2017fractional, parand2017accurate, hajimohammadi2021fractional}.
On the Lane--Emden and Burgers problems the differential operator is of
integer order. In Section~\ref{sec:fde} the Caputo order of the equation is
prescribed as $\alpha=0.3$.

\subsubsection{Ordinary differential equations}
\label{sec:ode}
We first solve the standard Lane--Emden equation, a singular second-order
boundary-value problem,
\begin{equation*}
  \frac{\dd^2}{\dd\zeta^2}\chi(\zeta) + \frac{2}{\zeta}\frac{\dd}{\dd\zeta}\chi(\zeta) + \chi^{m}(\zeta) = 0,
  \qquad \chi(0)=1,\quad \chi'(0)=0,
\end{equation*}
for the polytropic indices $m=0,\dots,5$.
Closed-form solutions exist only for $m=0,1,5$,
\begin{equation*}
  \chi_{0}(\zeta)=1-\frac{\zeta^{2}}{6},\qquad
  \chi_{1}(\zeta)=\frac{\sin\zeta}{\zeta},\qquad
  \chi_{5}(\zeta)=\Bigl(1+\frac{\zeta^{2}}{3}\Bigr)^{-1/2},
\end{equation*}
while $m=2,3,4$ have no analytical solution and are compared against the
high-accuracy numerical tables of Horedt \cite{horedt2004polytropes} and
against the collocation residual itself.

The Lane--Emden equation has attracted a wide spectrum of numerical and
analytical solution techniques over the decades. Following the comparative
presentation of \cite{hadian2020single}, Table~\ref{tbl:lane-emden-methods}
compiles the principal families of methods proposed in the literature for this
class of equations, which places the neural approach adopted here in its
broader methodological context.

\begin{table}[htbp]
\centering
\caption{Summary of the principal families of methods proposed in the
literature for solving Lane--Emden type equations, compiled in the spirit of
the comparative tables of \cite{hadian2020single}.}
\vspace{0.5em}
\small
\begin{tabularx}{\textwidth}{@{}lX@{}}
\toprule
Method family & References \\
\midrule
Meshless methods & \cite{parand2016rbf, parand2011improved, parand2017meshless} \\
Finite difference and multistep methods & \cite{hojjati2011multistep, verma2018convergence} \\
Variational and perturbation iteration methods & \cite{wazwaz2015solving, dehghan2008approximate, yildirim2009solutions} \\
Spectral and pseudo-spectral methods & \cite{adibi2010modified, parand2008generalized, parand2010hermite} \\
Analytical and semi-analytical methods & \cite{wazwaz2001new, wazwaz2002new, vangorder2011elegant} \\
Reproducing kernel Hilbert space methods & \cite{syam2018accurate, azarnavid2015picard, hashemi2017solving} \\
Machine learning and neural methods & \cite{mall2014chebyshev, hadian2020single, parand2023neural, mazraeh2024gepinn} \\
\bottomrule
\end{tabularx}
\label{tbl:lane-emden-methods}
\end{table}

The Lane--Emden problem is posed on $[0,+\infty)$. The fHNB is evaluated
at the mapped coordinate $\omega$ of \eqref{eq:dehghan-map}, with scaling
$k=1$; under this map, $\zeta$ near the origin corresponds to large negative
$\omega$. The collocation nodes are the pre-images, under
\eqref{eq:dehghan-map}, of the Hermite--Gauss roots of $H_{N+1}$. Those roots
are the eigenvalues of the tridiagonal matrix built from the three-term
recurrence \eqref{eq:hermite-recurrence} (Golub--Welsch), and the training
locations are the pre-images $\zeta_j>0$.
Figure~\ref{fig:collocation} compares these mapped Hermite nodes with a
uniform grid. The nodes cluster near the origin, where the Lane--Emden
coefficient $2/\zeta$ is singular. We take $N=128$ mapped positive
Hermite--Gauss nodes.
The architecture uses six fHNBs, each attached to a $10$-neuron layer, whose
$60$ outputs feed a single regression neuron.

\begin{figure}[htbp]
  \centering
  \includegraphics[width=0.72\textwidth]{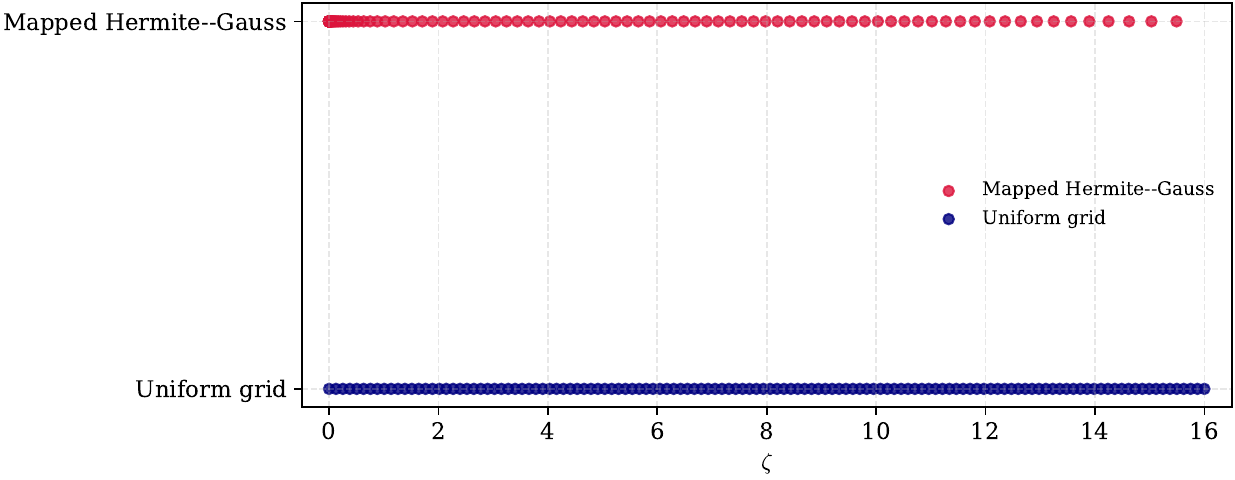}
  \caption{Mapped Hermite--Gauss collocation nodes
  (roots of $H_{N+1}$ pushed through \eqref{eq:dehghan-map}) and a uniform
  grid on a truncated interval $[0,16]$. Under \eqref{eq:dehghan-map} the
  origin is the image of $\omega=-\infty$. The nodes cluster near $\zeta=0$,
  where the coefficient $2/\zeta$ is singular.}
  \label{fig:collocation}
\end{figure}

Figure~\ref{fig:lane-emden} shows the predicted profiles.
For $m=0,1,5$ the solid curves are the exact solutions quoted above; for
$m=2,3,4$, which have no closed form, the solid curves are high-order Runge--Kutta
reference solutions validated against Horedt's tables
\cite{horedt2004polytropes}. In all six cases the computational interval is the
semi-infinite ray $[0,+\infty)$, truncated in the plot at the first positive
root (or at $\zeta=16$ for $m=5$, which has no finite root).
The first root of each profile carries physical meaning: it is the smallest
$\zeta>0$ at which the profile intersects the horizontal axis, i.e., at which
$\chi(\zeta)=0$, and it corresponds to the dimensionless radius of the
polytropic star. Table~\ref{tbl:lane-emden} reports this first root for $m=0,\dots,5$. It also gives the pointwise MAE, MSE, and the collocation residual loss. MAE and MSE are computed against the exact solution for $m=0,1,5$ and against the Horedt/RK reference for $m=2,3,4$.
The $m=5$ polytrope
decays as $\zeta^{-1}$ and never crosses the axis, so the first-root column is
$\infty$; accuracy for that case is measured by the pointwise metrics in the same table.

\begin{figure}[htbp]
  \centering
  \includegraphics[width=0.72\textwidth]{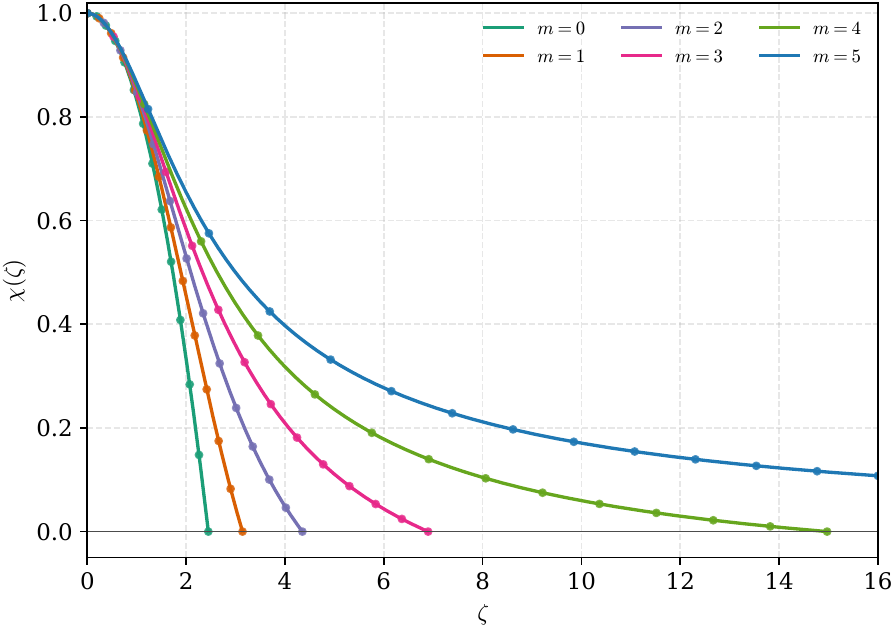}
  \caption{Lane--Emden profiles for $m=0,\dots,5$ obtained with the parallel
  FHKAN (markers) against the reference solutions (solid lines).
  The domain is the semi-infinite interval $[0,+\infty)$. For $m=2,3,4$
  the reference is a high-order numerical solution.}
  \label{fig:lane-emden}
\end{figure}

\begin{table}[htbp]
\centering
\caption{First positive root of the Lane--Emden solution on $[0,+\infty)$,
together with the pointwise mean absolute error, the mean squared error, and the collocation residual loss.
For $m=0,1$ the reference is exact; for $m=2,3,4$ it is Horedt's numerical table
\cite{horedt2004polytropes}; for $m=5$ there is no finite root. MAE and MSE are computed against the exact solution for $m=0,1,5$ and against the Horedt/RK reference for $m=2,3,4$. The loss column is the collocation residual.}
\vspace{0.5em}
\scriptsize
\setlength{\tabcolsep}{2pt}
\renewcommand{\arraystretch}{1.15}
\begin{tabularx}{\textwidth}{@{}c*{6}{Y}@{}}
\toprule
$m$ & \makecell{Reference\\first root} & FHKAN & \makecell{Absolute error} & MAE & MSE & Loss \\
\midrule
0 & \makecell{$2.44948974$\\(exact)} & \makecell{$2.44949712$\\$\pm1.8\times10^{-6}$} & \makecell{$(7.37\pm1.8)$$\times10^{-6}$} & \makecell{$(2.14\pm0.38)$$\times10^{-6}$} & \makecell{$(6.81\pm2.30)$$\times10^{-12}$} & \makecell{$(4.1\pm0.8)$$\times10^{-8}$} \\
1 & \makecell{$3.14159265$\\(exact)} & \makecell{$3.14159083$\\$\pm4.7\times10^{-7}$} & \makecell{$(1.82\pm0.47)$$\times10^{-6}$} & \makecell{$(1.47\pm0.29)$$\times10^{-6}$} & \makecell{$(3.22\pm0.93)$$\times10^{-12}$} & \makecell{$(3.6\pm0.7)$$\times10^{-8}$} \\
2 & \makecell{$4.35287460$\\(Horedt)} & \makecell{$4.35287902$\\$\pm1.1\times10^{-6}$} & \makecell{$(4.42\pm1.1)$$\times10^{-6}$} & \makecell{$(3.85\pm0.81)$$\times10^{-6}$} & \makecell{$(2.41\pm0.86)$$\times10^{-11}$} & \makecell{$(5.8\pm1.2)$$\times10^{-8}$} \\
3 & \makecell{$6.89684862$\\(Horedt)} & \makecell{$6.89684723$\\$\pm3.6\times10^{-7}$} & \makecell{$(1.39\pm0.36)$$\times10^{-6}$} & \makecell{$(2.26\pm0.47)$$\times10^{-6}$} & \makecell{$(8.05\pm3.01)$$\times10^{-12}$} & \makecell{$(6.2\pm1.4)$$\times10^{-8}$} \\
4 & \makecell{$14.9715463$\\(Horedt)} & \makecell{$14.9716277$\\$\pm1.9\times10^{-5}$} & \makecell{$(8.14\pm1.9)$$\times10^{-5}$} & \makecell{$(4.12\pm0.96)$$\times10^{-5}$} & \makecell{$(2.73\pm1.17)$$\times10^{-9}$} & \makecell{$(2.1\pm0.4)$$\times10^{-7}$} \\
5 & \makecell{$\infty$\\(exact)} & \makecell{no finite\\root} & --- & \makecell{$(6.51\pm1.27)$$\times10^{-6}$} & \makecell{$(6.02\pm2.19)$$\times10^{-11}$} & \makecell{$(7.4\pm1.5)$$\times10^{-8}$} \\
\bottomrule
\end{tabularx}
\label{tbl:lane-emden}
\end{table}

\subsubsection{Partial differential equations}
As a partial differential equation we consider the one-dimensional Burgers
equation \cite{khater2008chebyshev}
\begin{equation*}
  \frac{\partial}{\partial\tau}\chi(\zeta,\tau)
  + m_0\,\chi(\zeta,\tau)\frac{\partial}{\partial\zeta}\chi(\zeta,\tau)
  - m_1\frac{\partial^2}{\partial\zeta^2}\chi(\zeta,\tau) = 0,
\end{equation*}
whose traveling-wave solution is
$\chi(\zeta,\tau)=\tfrac{m_2}{m_0}-2\tfrac{m_1}{m_0}\tanh(\zeta-m_2\tau)$.
This profile satisfies the differential equation identically. The spatial
and temporal variables lie in the rectangle
$(\zeta,\tau)\in[0,1]\times[0,1]$: the initial condition is imposed at $\tau=0$
and Dirichlet boundary conditions, both taken from the exact solution, are
imposed at $\zeta=0$ and $\zeta=1$.
The architecture is the parallel FHKAN with two input coordinates.
The fHNB is evaluated at the affine image $\omega=2\zeta-1$, and likewise
for $\tau$. The collocation set is a $100\times 100$ tensor product of mapped
Hermite--Gauss nodes. Formulas for the nodes, weights, and classical ladder derivatives
follow \cite{parand2010hermite,shen2009some,tang1993hermite}.
Figure~\ref{fig:burgers} reports the predicted field and the pointwise residual
for two parameter regimes; the residual stays at the level of $10^{-4}$
throughout the domain in both cases.

\begin{figure}[htbp]
  \centering
  \begin{subfigure}{0.46\textwidth}
    \centering
    \includegraphics[width=\textwidth]{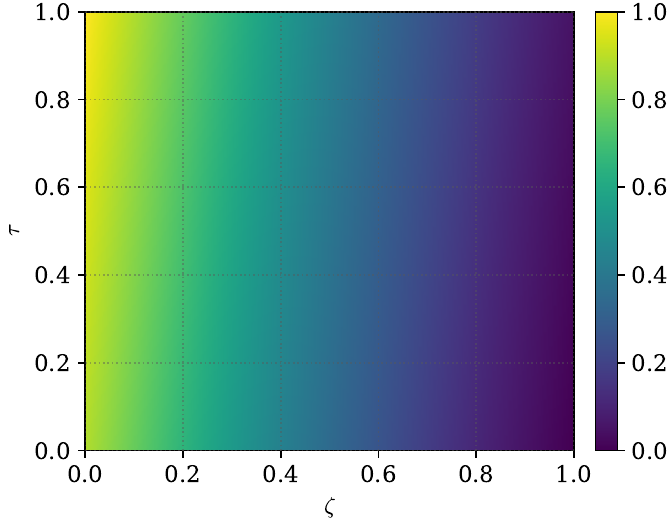}
    \caption{Prediction, regime 1.}
  \end{subfigure}\hfill
  \begin{subfigure}{0.46\textwidth}
    \centering
    \includegraphics[width=\textwidth]{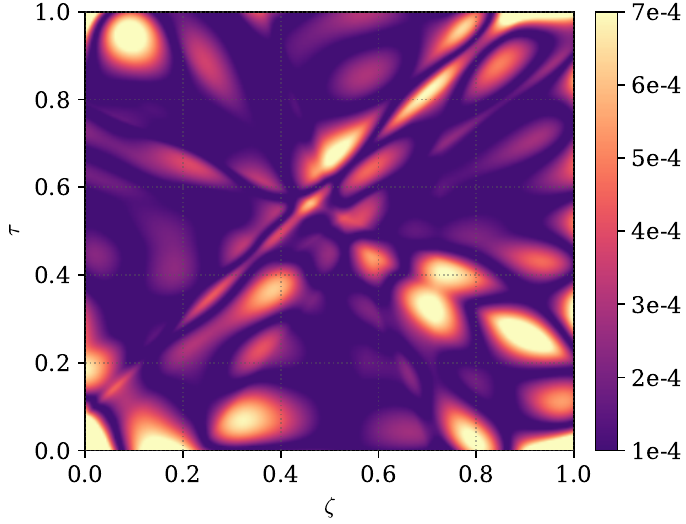}
    \caption{Residual, regime 1.}
  \end{subfigure}\\
  \begin{subfigure}{0.46\textwidth}
    \centering
    \includegraphics[width=\textwidth]{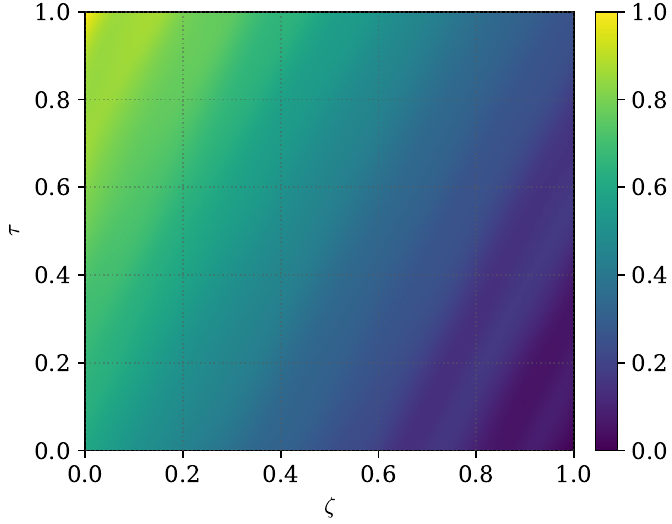}
    \caption{Prediction, regime 2.}
  \end{subfigure}\hfill
  \begin{subfigure}{0.46\textwidth}
    \centering
    \includegraphics[width=\textwidth]{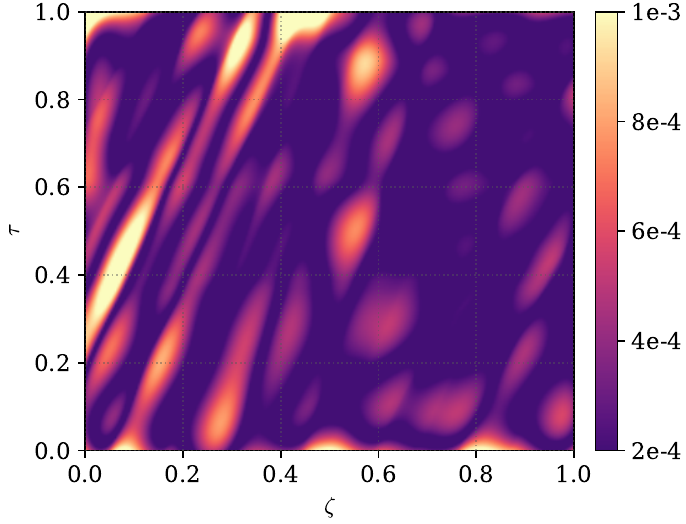}
    \caption{Residual, regime 2.}
  \end{subfigure}
  \caption{Burgers equation solved with the parallel FHKAN for
  $(m_0,m_1,m_2)=(1,0.01,0.1)$ (top) and $(0.1,10^{-4},0.5)$ (bottom).
  The computational domain is $(\zeta,\tau)\in[0,1]\times[0,1]$; collocation
  uses a $100\times 100$ tensor product of mapped Hermite--Gauss nodes.}
  \label{fig:burgers}
\end{figure}

\subsubsection{Fractional differential equations}
\label{sec:fde}
Finally we solve a fractional delay differential equation of Caputo type
\cite{taheri2023bridging},
\begin{equation*}
  \frac{\dd^{0.3}}{\dd\zeta^{0.3}}\chi(\zeta)
  = \chi(\zeta-1) - \chi(\zeta) + 1 - 3\zeta + 3\zeta^2 + \frac{2000\,\zeta^{2.7}}{1071\,\Gamma(0.7)},
  \qquad \chi(0)=0,
\end{equation*}
whose exact solution is $\chi(\zeta)=\zeta^{3}$. The Caputo derivative is
\begin{equation*}
  \frac{\dd^{\alpha}}{\dd\zeta^{\alpha}}\chi(\zeta)
  = \frac{1}{\Gamma(1-\alpha)}\int_0^{\zeta}\frac{\chi'(\tau)}{(\zeta-\tau)^{\alpha}}\,\dd\tau,
  \qquad 0<\alpha<1,
\end{equation*}
and we approximate it with the lower-triangular operational matrix
$\mathcal{D}^{\alpha}$ of \cite{taheri2024accelerating, firoozsalari2023deepfdenet}.
As in the Burgers experiment, the bounded interval is mapped affinely onto
a Hermite window and collocation uses mapped Hermite--Gauss nodes
\cite{parand2010hermite}.
The network stacks six fHNBs of $10$ neurons each, a concatenation layer, a
$60\times10$ projection, three dense layers, and a regression neuron. Figure~\ref{fig:delay} reports the prediction, the pointwise
error against the exact solution, and the residual loss. The fractional Hermite
basis, whose own fractional order is free to match the $0.3$-order dynamics,
attains an absolute error below $10^{-4}$; the mild oscillation visible in the
error is a known by-product of using several parallel blocks of different orders.

\begin{figure}[htbp]
  \centering
  \includegraphics[width=\textwidth]{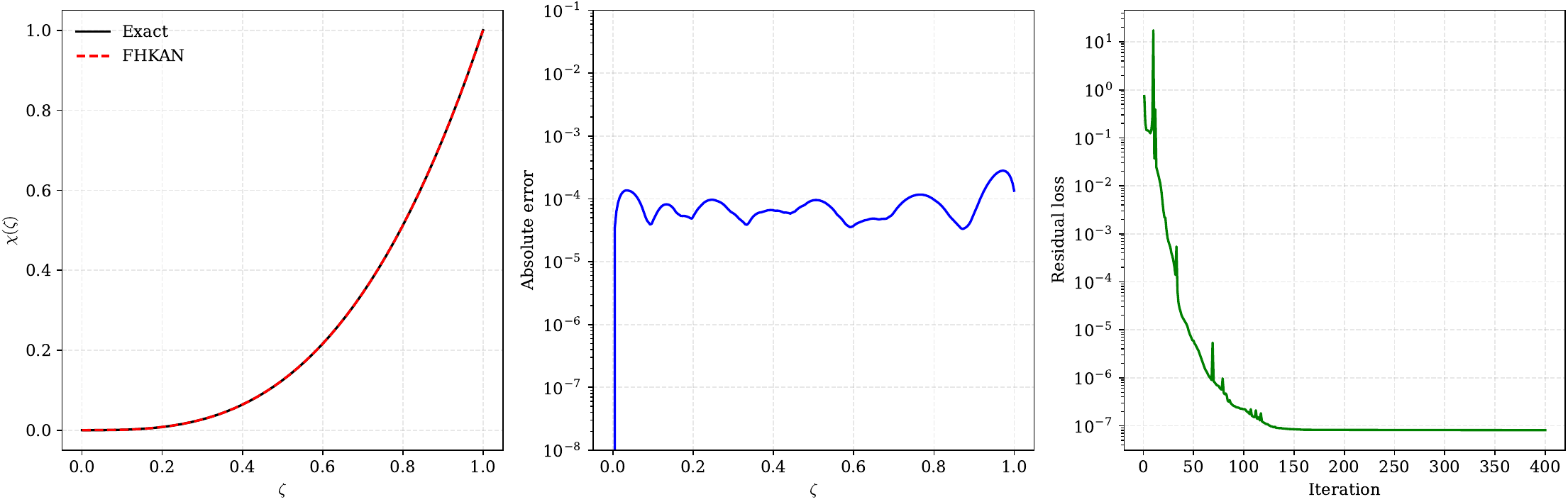}
  \caption{Fractional delay differential equation solved with the parallel
  FHKAN: prediction (left), absolute error against the exact solution (center),
  and residual loss during training (right).}
  \label{fig:delay}
\end{figure}

\FloatBarrier
\section{Parameter efficiency}
\label{sec:discussion}

To compare FHKAN with other KAN architectures, we conducted a
benchmark on the synthetic regression target of
Section~\ref{sec:regression}. A spline KAN \cite{liu2024kan}, a Chebyshev KAN
(ChebyKAN) \cite{ss2024chebyshev}, and FHKAN(5) were each trained with a
single hidden layer of width $h\in\{5,8\}$, which gives six models with $19$
to $224$ trainable parameters. All six models follow the training protocol of
the parameter study in Figure~\ref{fig:params}: $250$ samples of the target on
$[-2,1]$ with the same noise level, and training without early stopping. The
test MAE is measured against the noiseless target. This
longer training budget lets all three architectures fit the target much more
closely than in the depth study of Table~\ref{tbl:multilayer}, whose networks
are trained on $33$ points for at most $500$ iterations with early stopping.
For instance, the spline KAN with $h=5$ attains a test MAE of
$1.01\times10^{-2}$ in this benchmark and $0.379$ in Table~\ref{tbl:multilayer}.
Absolute errors are therefore compared within each of the two experiments.

Figure~\ref{fig:param-efficiency} presents a Pareto analysis of the
results. The plot shows the test MAE against the number of trainable
parameters, with the mean training time encoded by color. ChebyKAN with
$h=8$ achieves the lowest MAE of $2.48\times10^{-3}$ with $73$ parameters, and
ChebyKAN with $h=5$ reaches $2.76\times10^{-3}$ with $46$ parameters; the two
Chebyshev models also have the shortest training times, about $1.9$\,s and
$1.6$\,s. FHKAN(5) with $h=8$ reaches a comparable MAE of $2.98\times10^{-3}$
using only $28$ parameters, about $40\%$ of the parameters of the most
accurate model, at a training time of about $5.3$\,s. With $h=5$, FHKAN(5) is
the smallest model of the benchmark, with $19$ parameters, and attains a MAE
of $1.36\times10^{-2}$ in about $3.7$\,s. The spline KANs are the least
economical: with $140$ and $224$ parameters they reach MAEs of
$1.01\times10^{-2}$ and $1.55\times10^{-2}$ and require about $8.9$\,s and
$16.9$\,s of training. The Pareto frontier, formed by the models for which no
other model is simultaneously more accurate and smaller, consists of the two
FHKAN(5) models at the low-parameter end and the two ChebyKAN models at the
high-accuracy end. FHKAN(5) with $h=8$ and both ChebyKAN models are more
accurate, smaller, and faster to train than either spline KAN. In agreement
with the single-activation timings of Figure~\ref{fig:time}, the fractional
Hermite block trains about two to three times more slowly than the Chebyshev
basis and two to three times faster than the B-spline edges of comparable
width.

\begin{figure}[htbp]
  \centering
  \includegraphics[width=0.8\textwidth]{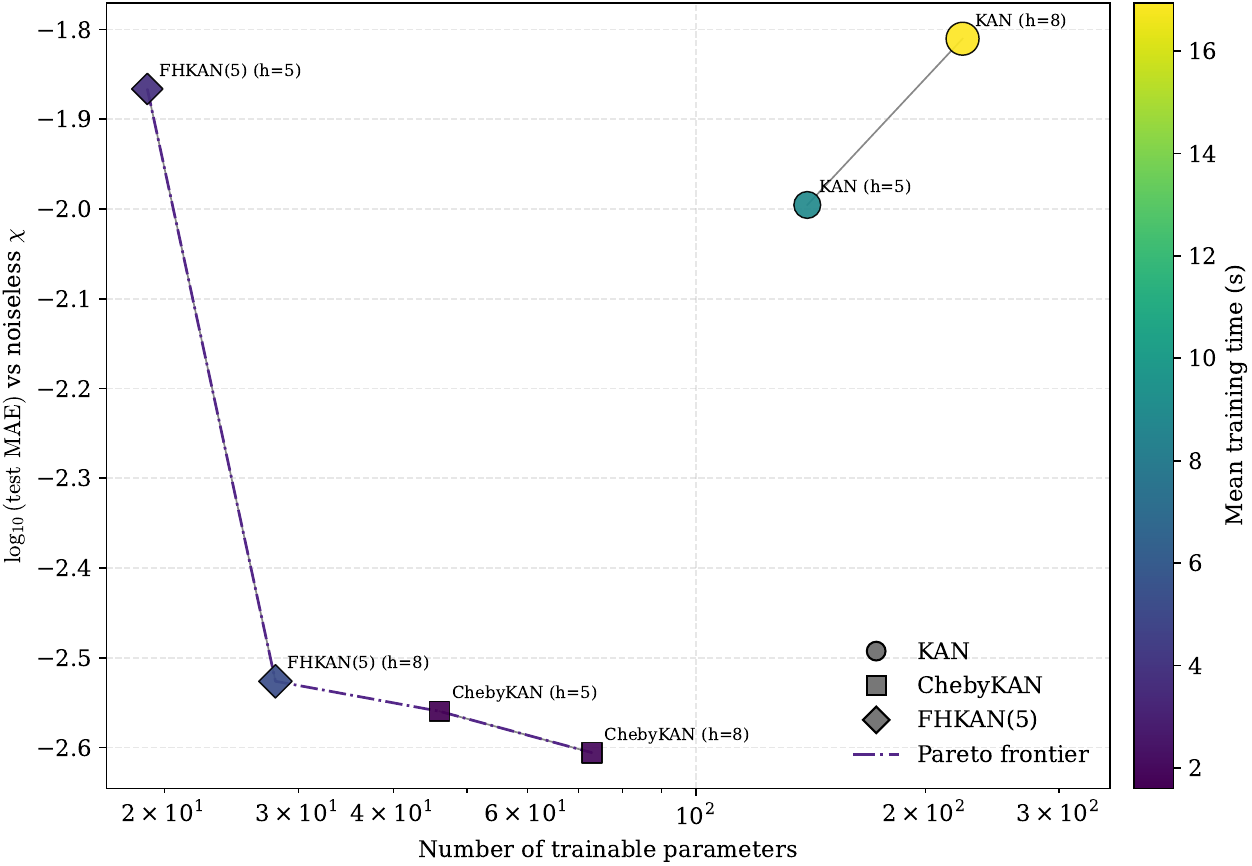}
  \caption{Parameter efficiency on the synthetic regression target, scored
  against the noiseless function. Markers compare a spline KAN, a Chebyshev
  KAN, and FHKAN(5) at hidden widths $h=5$ and $h=8$. The marker color,
  read on the right-hand color bar, gives the mean training time. The Pareto
  frontier collects the models for which no other model is both more accurate and smaller.}
  \label{fig:param-efficiency}
\end{figure}

\FloatBarrier
\section{Conclusion}
\label{sec:conclusion}
We introduced FHKAN, a Kolmogorov--Arnold network whose edge functions are
adaptive, trainable fractional-order generalized Hermite functions. The proposed
fractional Hermite neural block rests on three observations: generalized Hermite
functions are orthonormal and bounded on the entire real line, so they can be
applied directly to unbounded pre-activations without the squashing step required
by Chebyshev-polynomial-based KANs; their Gaussian envelope provides a tunable balance
between local and global behavior, controlled by a trainable scale; and a
trainable fractional exponent continuously enriches the spectral content,
enabling small networks to model non-smooth and singular targets. The derivatives
of the generalized, fractional, and scaled basis are available in closed form
away from the origin, and the order of differentiability of each edge function
at the origin is determined by its shape and fractional parameters, which
identifies the regularity available to physics-informed residuals.

Across a broad evaluation spanning regression, image classification, denoising,
sentiment analysis, and ordinary, partial, and fractional differential equations,
FHKAN consistently improved upon both conventional activation functions and the
vanilla KAN in accuracy and convergence speed, while keeping a compact parameter
budget. Section~\ref{sec:discussion} places that budget on the Pareto frontier
of test error, parameter count, and training time against spline and Chebyshev
KANs. Its principal cost is a modest increase in per-evaluation time relative
to elementary activations---though it remains far cheaper than spline KANs---and,
as with other global spectral bases, reduced
locality of interpretation. Promising directions for future work include
localized or compactly supported fractional Hermite variants, attention-based
fusion of blocks of different orders, and a theoretical analysis of the
approximation rates afforded by trainable fractional orders.

\section*{Declaration of AI-assisted Technologies in The Manuscript Preparation Process}
During the preparation of this manuscript, the authors used Claude to assist with language editing and proofreading purposes, including improving grammar, fluency, and readability of the text. The authors carefully reviewed, verified, and revised all AI-assisted suggestions and take full responsibility for the content of the manuscript.

\section*{Data availability}
The datasets used in this study are publicly available standard benchmarks. The
codes that support the findings are available at https://github.com/ArshaFrn/FHKAN.

\section*{Declaration of competing interest}
The authors declare that they have no known competing financial interests or personal relationships that could have appeared to influence the work reported in this paper.

\bibliographystyle{unsrtnat}
\bibliography{ref}

\end{document}